\newtheorem{proposition}{Proposition}
\newcommand{\eqdef}{\buildrel \text{d{}ef}\over = }
\newcommand{\argmin}{\mathop{\mathrm{argmin}}}
\newcommand{\sgn}{\mathop{\mathrm{sgn}}}
\newcommand{\myldots}{\;\ldots\;}
\newcommand{\vbeta}{\boldsymbol \beta}
\newcommand{\va}{\mathbf a}
\newcommand{\vx}{\mathbf x}
\newcommand{\vy}{\mathbf y}
\newcommand{\hy}{\hat{y}}
\newcommand{\suchthat}{\;\ifnum\currentgrouptype=16 \middle\fi|\;}
\renewcommand{\tilde}{\widetilde}
\begin{document}
\selectlanguage{english}

\title{Distributed Coordinate Descent for Generalized Linear Models with Regularization}

\author{\firstname{Ilya}~\surname{Trofimov}}
\email{trofim@yandex-team.ru}
\affiliation{%
Yandex
}%
\author{\firstname{Alexander}~\surname{Genkin}}
\email{alexander.genkin@gmail.com}
\affiliation{%
NYU Langone Medical Center
}%


\begin{abstract}
Generalized linear model with $L_1$ and $L_2$ regularization is a widely used technique for solving classification, class probability estimation and regression problems.
With the numbers of both features and examples growing rapidly in the fields like text mining and clickstream data analysis parallelization and the use of cluster architectures becomes important.
We present a novel algorithm for fitting regularized generalized linear models in the distributed environment.
The algorithm splits data between nodes by features, uses coordinate descent on each node and line search to merge results globally.
Convergence proof is provided.
A modifications of the algorithm addresses slow node problem.
For an important particular case of logistic regression we empirically compare our program with several state-of-the art approaches that rely on different algorithmic and data spitting methods.
Experiments demonstrate that our approach is scalable and superior when training on large and sparse datasets.
\end{abstract}

\maketitle

\textbf{Keywords}: large-scale learning $\cdot$ generalized linear model $\cdot$ regularization $\cdot$ sparsity

\section{Introduction}
Generalized linear model (GLM) with regularization is the method of choice for solving classification, class probability estimation and regression
problems in text classification \citep{Genkin2007}, clickstream data analysis \cite{Mcmahan2013}, web data mining \citep{Yuan2010} and compressed sensing \cite{Bradley2011}.
Despite the fact that GLM can build only linear separating surfaces and regressions, with proper regularization it can achieve good testing accuracy for high dimensional input spaces.
For several problems the testing accuracy has shown to be close to that of nonlinear kernel methods \citep{Yuan2012Recent}.
At the same time training and testing of linear predictors is much faster.
It makes the GLM a good choice for large-scale problems.

Choosing the right regularizer is problem dependent. $L_2$-regularization is known to shrink coefficients towards zero leaving correlated ones in the model.
$L_1$-regularization leads to a sparse solution and typically selects only one coefficient from a group of correlated ones.
Elastic net regularizer is a linear combination of $L_1$ and $L_2$. It allows to select a trade-off between them.
Other regularizers are less often used: group lasso \cite{yuan2006model, Meier2008} which includes and excludes variables in groups, non-smooth bridge \cite{Fu1998} and non-convex SCAD \cite{Fan2001} regularizers.

Fitting a statistical model on a large dataset is time consuming and requires a careful choosing of an optimization algorithm.
Not all methods working on a small scale can be used for large scale problems. At present time algorithms dedicated for optimization on a single machine are well developed.

Fitting commonly used GLMs with $L_2$ regularization is equivalent to minimization of a smooth convex function.
On the large scale this problem is typically solved by the conjugate gradient method, ``Limited memory BFGS'' (L-BFGS) \citep{Sutton2002, nocedal1980updating},
TRON \cite{Zhuang2015}.
Coordinate descent algorithms works well in primal \citep{Genkin2007} and dual \citep{Yu2011}.
Also this problem can be effectively solved by various online learning algorithms \citep{Mcmahan2013, Karampatziakis2010}.

Using $L_1$ regularization is harder because it requires to optimize a convex but non-smooth function.
A broad survey \cite{Yuan2010} suggests that coordinate descent methods are the best choice for $L_1$-regularized logistic regression on the large scale.
Widely used algorithms that fall into this family are: BBR \cite{Genkin2007}, GLMNET \cite{Friedman2010}, newGLMNET \cite{Yuan2012a}. Coordinate descent methods also work well for large-scale high dimensional LASSO \cite{Fu1998}.
Software implementations of these methods start with loading the full training dataset into RAM, which limits the possibility to scale up.

Completely different approach is online learning \citep{Balakrishnan2007, Langford2009, Mcmahan2011, Mcmahan2013}.
This kind of algorithms do not require to load training dataset into RAM and can access it sequentially (i.e. reading from disk).
Balakrishnan and Madighan \cite{Balakrishnan2007}, Langford et al. \cite{Langford2009} report that online learning performs well when compared to batch counterparts (BBR and LASSO).

Nowadays we see the growing  number of problems where both the number of examples and the number of features are very large.
Many problems grow beyond the capabilities of a single machine and need to be handled by distributed systems.
Distributed machine learning is now an area of active research.
Efficient computational architectures and optimizations techniques allow to find more precise solutions, process larger training dataset (without subsampling), and reduce the computational burden.

Approaches to distributed training of GLMs naturally fall into two groups by the way they split data across computing nodes: by examples \cite{Agarwal2011}
or by features \cite{Peng2013}.
We believe that algorithms that split data by features can achieve better performance and faster training speed than those that split by examples.
Our experiments so far confirm that belief.

When splitting data by examples, online learning comes in handy. A model is trained in online fashion on each subset,
then parameters of are averaged and used as a warmstart for the next iteration, and so on \cite{Agarwal2011, Zinkevich2010}.
The L-BFGS and conjugate gradient methods can be easily implemented for example-wise splitting \cite{Agarwal2011}.
The log-likelihood and its gradient are separable over examples. Thus they can be calculated in parallel on parts of training set and then summed up.

Parallel block-coordinate descent is a natural algorithmic framework if we choose to split by features. The challenge here is how to combine steps from coordinate blocks, or computing nodes,
and how to organize communication. When features are independent, parallel updates can be combined straightforwardly, otherwise they may come into conflict and not yield enough improvement to objective; this has been clearly illustrated in \cite{Bradley2011}.
Bradley et al. \cite{Bradley2011} proposed Shotgun algorithm based on randomized coordinate descent.
They studied how many variables can be updated in parallel to guarantee convergence.
Ho et al. \cite{Ho2013} presented distributed implementation of this algorithm compatible with State Synchronous Parallel Parameter Server.
Richt\'{a}rik and Tak\'a\v{c} \cite{Richtarik2012} use randomized block-coordinate descent and also exploit partial separability of the objective. The latter relies on sparsity in data, which  is indeed characteristic to many large scale problems. They present theoretical estimates of speed-up factor of parallelization.
Peng et al. \cite{Peng2013} proposed a greedy block-coordinate descent method, which selects the next coordinate to update based on the estimate of the expected
improvement in the objective.
They found their GRock algorithm to be superior over parallel FISTA \cite{Beck2009} and ADMM \cite{Boyd2010}.
Smith et al. \cite{Smith2015} introduce aggregation parameter, which controls the
level of adding versus averaging of partial solutions of all machines.
Meier et al. \cite{Meier2008} used block-coordinate descent for fitting logistic regression with the group lasso penalty. They use a diagonal approximation of a Hessian and make steps over blocks of variables in a group followed by a line search to ensure convergence. All groups are processed sequentially; parallel version of the algorithm is not studied there, though the authors mention this possibility.

In contrast, our approach is to make parallel steps on all blocks, then use combined update as a direction and perform a line search.
We show that sufficient data for the line search have the size $O(n)$, where $n$ is the number of training examples,
so it can be performed on one machine.
Consequently, that's the amount of data sufficient for communication between machines.
Overall, our algorithm fits into the framework of CGD method proposed by Tseng and Yun \cite{Tseng2007}, which allows us to prove convergence.

Our main contributions are the following:
\begin{itemize}
\item We propose a new parallel coordinate descent algorithm for $L_1$ and $L_2$ regularized GLMs (Section \ref{sec:math}) and prove its convergence for linear, logistic and probit regression (Section \ref{sec:convergence})
\item We demonstrate how to guarantee sparsity of the solution by means of trust-region updates (Section \ref{sec:large_mu})
\item We develop a computationally efficient software architecture for fitting GLMs with regularizers in the distributed settings (Section \ref{software-implementation})
\item We show how our algorithm can be modified to solve the ``slow node problem'' which is common in distributed machine learning (Section \ref{sec:alb})
\item We empirically show effectiveness of our algorithm and its implementation in comparison with several state-of-the art methods for the particular case of logistic regression (Section \ref{sec:numerical})
\end{itemize}


The C++ implementation of our algorithm, which we call \texttt{d-GLMNET}, is publicly available at
\url{https://github.com/IlyaTrofimov/dlr}.

\section{Problem setting}
\label{software-implementation}

Training linear classification and regression leads to the optimization problem
\begin{equation}
\label{problem}
\vbeta^* = \argmin_{\vbeta \in \mathbb{R}^p} f(\vbeta),
\end{equation}
\begin{equation}
\label{target}
f(\vbeta)= L(\vbeta) + R(\vbeta).
\end{equation}
Where $L(\vbeta) = \sum_{i=1}^n \ell(y_i, \vbeta^T \vx_i)$ is the negated log-likelihood and $R(\vbeta)$ is a regularizer.
Here $y_i$ are targets, $\vx_i \in \mathbb{R}^p$ are input features, $\vbeta \in \mathbb{R}^p$ is the unknown vector of weights for input features.
We will denote by $nnz$ the number of non-zero entries in all $x_i$.
The function $\ell (y, \hat{y})$ is a example-wise loss which we assume to be convex and twice differentiable. Many statistical problems can be expressed in this form:
logistic, probit, Poisson regression, linear regression with squared loss, etc.

Some penalty $R(\vbeta)$ is often added to avoid overfitting and numerical ill-conditioning.
In this work we consider the elastic net regularizer
$$
R(\vbeta) = \lambda_1 \|\vbeta\|_1 + \frac{\lambda_2}{2} \|\vbeta\|^2.
$$


We solve the optimization problem (\ref{problem}) by means of a block coordinate descent algorithm.
The first part of the objective - $L(\vbeta)$ is convex and smooth.
The second part is a regularization term $R(\vbeta)$, which is convex but non-smooth when $\lambda_1 > 0$.
Hence one cannot use directly efficient optimization techniques like conjugate gradient method or L-BFGS which are often used for logistic regression with $L_2$-regularization.

Our algorithm is based on building local approximations to the objective (\ref{target}).
A smooth part $L(\vbeta)$ of the objective has quadratic approximation
\begin{align}
\label{IRLS}
& \sum_{i=1}^{n} \ell (y_i, (\vbeta + \Delta \vbeta)^T x_i) \approx L_q(\vbeta, \Delta \vbeta) \notag \\
& = \sum_{i=1}^{n}  \left\{ \ell (y_i, \vbeta^T x_i) + \frac{\partial \ell(y_i, \vbeta^T x_i)}{\partial \hat{y}} \Delta \vbeta^T \vx_i
+ \frac{1}{2}(\Delta \vbeta^T \vx_i ) \frac{\partial^2 \ell(y_i, \vbeta^T x_i)}{\partial \hat{y}^2} (\Delta \vbeta^T \vx_i) \right\} \notag \\
& = C(\vbeta) + \frac{1}{2} \sum_{i=1}^{n} w_i (z_i - \Delta \vbeta^T \vx_i)^2.
\end{align}
\begin{align*}
w_i = \frac{\partial^2 \ell (y_i, \vbeta^T x_i)}{\partial \hat{y}^2}, \\
z_i = - \frac{ \partial \ell (y_i, \vbeta^T x_i) / \partial \hat{y}}{ \partial^2 \ell (y_i, \vbeta^T x_i) / \partial \hat{y}^2}
\end{align*}
and $C(\vbeta)$ doesn't depend on $\Delta \vbeta$
\[
C(\vbeta) = L(\vbeta) - \frac{1}{2} \sum_{i = 1}^{n} z_i^2 w_i.
\]
The core idea of GLMNET and newGLMNET is iterative minimization of the penalized quadratic approximation to the objective
\begin{equation}
\label{quad}
\argmin_{\Delta \vbeta} \left\{ L_q(\vbeta, \Delta \vbeta) + R(\vbeta + \Delta \vbeta) \right\}
\end{equation}
via cyclic coordinate descent. This form (\ref{IRLS}) of approximation allows to make Newton updates of the vector $\vbeta$ without storing the Hessian explicitly.

The approximation (\ref{quad}) has a simple closed-form solution with respect to a single variable $\Delta \beta_j$
\begin{equation}
\label{beta-update}
\Delta \beta_j^* = \frac{T \left(\sum_{i=1}^{n} w_i x_{ij} q_i, \lambda_1 \right)}{\sum_{i=1}^{n} w_i x_{ij}^2 + \lambda_2} - \beta_j,
\end{equation}
\begin{align*}
T(x, a) = \sgn(x) \max(|x| - a, 0), \\
q_i = z_i - \Delta \vbeta^T \vx_i + (\beta_j + \Delta \beta_j) x_{ij}.
\end{align*}

\section{Parallel coordinate descent}
\label{sec:math}

In this work we introduce a novel architecture for a parallel coordinate descent in a distributed settings (multiple computational nodes).
The natural way to do it is to split training data set by features (``vertical'' splitting).
We will denote objects related to different computational nodes by upper indexes and use lower indexes for example and feature numbers.
More formally: let us split $p$ input features into $M$ disjoint sets $S^k$
$$
\bigcup\limits_{k=1}^{M} S^k = \{1, ..., p\}, \qquad S^m \cap S^k = \emptyset,\, k \neq m.
$$
Our approach is to optimize the quadratic approximation (\ref{quad}) in parallel over blocks of weights $\Delta \vbeta^m$.
The following proposition explains how this idea modifies the original GLMNET algorithm.
\begin{proposition} Optimizing the quadratic approximation (\ref{quad}) in parallel over blocks of weights $\Delta \vbeta^m$ is equivalent to optimizing the quadratic approximation to the objective
\begin{equation}
\label{quad-appr}
\argmin_{\Delta \vbeta} \left\{ L(\vbeta) + \nabla L(\vbeta)^T \Delta \vbeta + \frac{1}{2} \Delta \vbeta^T  \tilde{H}(\vbeta) \Delta \vbeta  + R(\vbeta + \Delta \vbeta) \right\}
\end{equation}
with block-diagonal $\tilde{H}(\vbeta)$ approximation of the Hessian

\begin{table}[h]
\begin{equation}
\label{tilde-h}
\centering
{
\begin{tabular}{cll}

\multicolumn{2}{c}{}
$(\tilde {H}(\vbeta))_{jl} = $ &
$\left\{
\begin{tabular}{l}
$(\nabla^2 L(\vbeta))_{jl}, \; \mbox{if} \; \exists m: \; j, l \in S^m$, \\
0, \; \mbox{otherwise.} \\
\end{tabular}
\right.$

\end{tabular}
}
\end{equation}
\end{table}

\end{proposition}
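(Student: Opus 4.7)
The plan is to show that the quadratic approximation $L_q$ in (\ref{IRLS}) is exactly the second-order Taylor expansion of $L$, then trace what happens when each node optimizes only over its own block with the remaining blocks held at zero, and finally reassemble the per-node subproblems into a single joint problem.

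First, I would expand $L_q(\vbeta,\Delta\vbeta)$ from (\ref{IRLS}) as a quadratic in $\Delta\vbeta$ and verify the identity
\begin{equation*}
L_q(\vbeta,\Delta\vbeta) \;=\; L(\vbeta) \;+\; \nabla L(\vbeta)^T \Delta\vbeta \;+\; \tfrac{1}{2}\,\Delta\vbeta^T \nabla^2 L(\vbeta)\,\Delta\vbeta.
\end{equation*}
This is just the observation that the gradient and Hessian of $L$ are $\sum_i (\partial \ell/\partial\hy)\,\vx_i$ and $\sum_i (\partial^2 \ell/\partial\hy^2)\,\vx_i\vx_i^T$, matched against the coefficients obtained by expanding $\frac{1}{2}\sum_i w_i(z_i-\Delta\vbeta^T\vx_i)^2$ together with the definitions of $w_i, z_i$ and $C(\vbeta)$.

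Next I would formalize ``optimizing in parallel over blocks.'' For each node $m$, let $P_m$ denote the coordinate projector onto $S^m$, so that the update proposed by node $m$ is a vector supported on $S^m$, and the remaining blocks enter node $m$'s local subproblem as zero. The local subproblem is then
\begin{equation*}
\argmin_{\Delta\vbeta^m} \bigl\{ L_q(\vbeta,\,P_m\Delta\vbeta^m) + R(\vbeta + P_m\Delta\vbeta^m) \bigr\}.
\end{equation*}
Plugging the Taylor form into $L_q$, the quadratic term $\tfrac{1}{2}(P_m\Delta\vbeta^m)^T \nabla^2 L(\vbeta)(P_m\Delta\vbeta^m)$ collapses to $\tfrac{1}{2}(\Delta\vbeta^m)^T H^{mm}(\vbeta)\,\Delta\vbeta^m$, where $H^{mm}$ is the principal submatrix of $\nabla^2 L$ indexed by $S^m$; all cross-block entries are annihilated because $P_k P_m = 0$ for $k\neq m$. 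Using coordinate-wise separability of the elastic-net regularizer $R$, the regularizer splits as $R(\vbeta + P_m\Delta\vbeta^m) = R^m(\vbeta^m+\Delta\vbeta^m) + \sum_{k\neq m} R^k(\vbeta^k)$, and the latter sum is constant in $\Delta\vbeta^m$.

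Finally, I would stitch the $M$ independent subproblems together. Since the sets $S^m$ partition $\{1,\ldots,p\}$, the joint minimizer $\Delta\vbeta$ obtained by concatenating the per-block minimizers also minimizes the sum of the node objectives, and this sum is equal, up to a $\Delta\vbeta$-independent constant, to
\begin{equation*}
L(\vbeta) + \nabla L(\vbeta)^T \Delta\vbeta + \tfrac{1}{2}\,\Delta\vbeta^T \tilde H(\vbeta)\,\Delta\vbeta + R(\vbeta+\Delta\vbeta),
\end{equation*}
with $\tilde H$ precisely the block-diagonal matrix in (\ref{tilde-h}). This is exactly (\ref{quad-appr}), establishing the claimed equivalence. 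The main obstacle, which is really just a bookkeeping issue, is making rigorous that the local subproblem of node $m$ corresponds to setting the other blocks of $\Delta\vbeta$ to zero inside the original GLMNET objective (\ref{quad}); once this is pinned down, the separability of $R$ and the block-diagonalization of the Hessian follow mechanically.
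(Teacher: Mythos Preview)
Your proposal is correct and follows essentially the same route as the paper: identify $L_q$ with the second-order Taylor expansion of $L$, observe that restricting $\Delta\vbeta$ to a single block $S^m$ reduces the quadratic form to the principal $S^m\times S^m$ submatrix of $\nabla^2 L(\vbeta)$, invoke coordinate-wise separability of $R$, and then reassemble the $M$ independent subproblems into the joint problem (\ref{quad-appr}) with the block-diagonal $\tilde H$. The paper's version is slightly terser---it writes $\Delta\vbeta=\sum_m\Delta\vbeta^m$ directly and sums the per-node $L_q(\vbeta,\Delta\vbeta^m)$ rather than introducing projectors $P_m$---but the content of the argument is identical.
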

\begin{proof}

Let $\Delta \vbeta = \sum_{m=1}^M \Delta \vbeta^m$, where $\Delta \beta^m_j = 0$ if $j \notin S^m$. Then
\begin{align*}
L_q(\vbeta, \Delta \vbeta^m) & = L(\vbeta) + \nabla L(\vbeta)^T \Delta \vbeta^m + \frac{1}{2} \Delta (\vbeta^m)^T \nabla^2 L(\vbeta) \Delta \vbeta^m \\
& = L(\vbeta) + \nabla L(\vbeta)^T \Delta \vbeta^m + \frac{1}{2} \sum_{j,k \in S^m} (\nabla^2 L(\vbeta))_{jk} \Delta \beta^m_{j} \Delta \beta^m_{k}.
\end{align*}
By summing this equation over $m$
\begin{align}
\label{eq}
\sum_{m=1}^M L_q(\vbeta, \Delta \vbeta^m) &=
\sum_{m=1}^M \left( L(\vbeta) + \nabla L(\vbeta)^T \Delta \vbeta^m + \frac{1}{2} \sum_{j,k \in S^m} (\nabla^2 L(\vbeta))_{jk} \Delta \beta^m_{j} \Delta \beta^m_{k} \right) \notag \\
& = M L(\vbeta) + \nabla L(\vbeta)^T \Delta \vbeta + \frac{1}{2} \Delta \vbeta^T  \tilde{H}(\vbeta) \Delta \vbeta.
\end{align}
From the equation (\ref{eq}) and separability of the $L_1$ and $L_2$ penalties it follows that solving the problem in the equation (\ref{quad-appr})
is equivalent to solving $M$ independent sub-problems
\begin{equation}
\label{sub-problem}
\argmin_{\Delta \vbeta^m} \left\{ L_q(\vbeta, \Delta \vbeta^m) + \sum_{j \in S^m} R(\beta_j + \Delta \beta^m_j) \suchthat \Delta \beta^m_j = 0 \; \text{if} \; j \notin S^m \right\}, \quad m = 1 \myldots M
\end{equation}
and can be done in parallel over $M$ nodes.
\end{proof}

Doing parallel updates over blocks of weights is the core of the proposed \texttt{d-GLMNET} algorithm. Also it is possible to minimize a more general approximation
$$
L_q^{gen}(\vbeta, \Delta \vbeta) \eqdef L(\vbeta) + \nabla L(\vbeta)^T \Delta \vbeta + \frac{1}{2} \Delta \vbeta^T  (\mu(\tilde{H}(\vbeta) + \nu I)) \Delta \vbeta,
$$
\begin{equation}
\label{quad-appr-mod}
\argmin_{\Delta \vbeta} \left\{ L_q^{gen}(\vbeta, \Delta \vbeta) + R(\vbeta + \Delta \vbeta) \right\},
\end{equation}
where $\mu \ge 1, \nu > 0$, without storing the Hessian explicitly. The one-dimensional update rule modifies accordingly

\begin{equation}
\label{beta-update-mod}
\Delta \beta_j^* = \frac{T \left(\sum_{i=1}^{n} w_i x_{ij} r_i + \nu \beta_j, \lambda_1 \right)}{\mu \sum_{i=1}^{n} w_i x_{ij}^2 + \lambda_2 + \nu} - \beta_j,
\end{equation}
$$
r_i = z_i - \mu \Delta \vbeta^T \vx_i + \mu (\beta_j + \Delta \beta_j) x_{ij}.
$$
Applying $\mu > 1$ improves sparsity of the solution in case of $L_1$ regularization (see Section \ref{sec:large_mu}). 
Addition of $\nu I$ guarantees that matrix is positive definite, which is essential for convergence (see Section \ref{sec:convergence}).

We describe a high-level structure of \texttt{d-GLMNET} in the Algorithm \ref{alg:d-glmnet-high}.


\begin{algorithm}[t]

\caption{Overall procedure of d-GLMNET.}
\label{alg:d-glmnet-high}

\DontPrintSemicolon
\SetAlgoVlined
\SetKwInOut{Input}{Input}\SetKwInOut{Output}{Return}

\Input{training dataset, $\lambda_1$, $\lambda_2$, feature splitting $S^1,\myldots,S^M$, $\eta_1 \ge 1, \eta_2 \ge 1$.}
\BlankLine

$\vbeta \gets 0$. \;
$\mu \gets 1$. \;

\While{not converged} {
Do in parallel over $M$ nodes: \;
\quad Minimize $L_q^{gen}(\vbeta, \Delta \vbeta^m) + R(\vbeta + \Delta \vbeta^m)$ with respect to $\Delta \vbeta^m$ (Algorithm \ref{alg:d-glmnet-sub}). \label{d-glmnet-high:step} \;
$\Delta \vbeta \gets \sum_{m=1}^{M} \Delta \vbeta^m$. \;
Find $\alpha \in (0, 1]$ by the line search procedure (Algorithm \ref{alg:d-glmnet-line-search}). \label{d-glmnet-high:alpha}\;
$\vbeta \gets \vbeta + \alpha \Delta \vbeta$.  \;

\eIf{$\alpha < 1$} {
$\mu \gets \eta_1 \mu$.
}{
$\mu \gets \max(1, \mu / \eta_2)$.
}

}
\BlankLine
\Output{$\vbeta$.}
\end{algorithm}

Algorithm \ref{alg:d-glmnet-sub} presents our approach for minimization of the local approximation (\ref{quad-appr-mod}) with respect to $\Delta \vbeta^m$.
\texttt{d-GLMNET} makes one cycle of coordinate descent over input features, while GLMNET and newGLMNET use multiple passes; we found that our approach works well in practice.

\begin{algorithm}[t]

\DontPrintSemicolon
\SetAlgoVlined
\SetKwInOut{Input}{Input}\SetKwInOut{Output}{Return}


\caption{Solving quadratic sub-problem at node $m$.}
\label{alg:d-glmnet-sub}

$\Delta \vbeta^m \gets 0$. \;
\ForEach{$j \in S^m$}{
Minimize $L_q^{gen}(\vbeta, \Delta \vbeta^m) + R(\vbeta + \Delta \vbeta^m)$ with respect to $\Delta \beta^m_j$ using (\ref{beta-update-mod}). \;
}
\BlankLine
\Output{$\Delta \vbeta^m$.}

\end{algorithm}

Like in other Newton-like algorithms a line search should be done to guarantee convergence.
Algorithm \ref{alg:d-glmnet-line-search} describes our line search procedure.
We found that selecting $\alpha_{init}$ by minimizing the objective (\ref{target}) (step \ref{alg:d-glmnet-line-search-alpha-init}, Algorithm \ref{alg:d-glmnet-line-search}) speeds up the convergence of the Algorithm \ref{alg:d-glmnet-high}.
We used $b = 0.5,\, \sigma = 0.01, \, \gamma = 0$ in line search procedure for numerical experiments (Section \ref{sec:numerical}).


\begin{algorithm}[t]

\DontPrintSemicolon
\SetKwInOut{Input}{Input}\SetKwInOut{Output}{Return}


\caption{Line search procedure.}
\label{alg:d-glmnet-line-search}

\KwData{$\delta > 0, 0 < b < 1, 0 < \sigma < 1, 0 \leq \gamma < 1$.}
\BlankLine

\eIf{$\alpha = 1$ yields sufficient decrease in the objective (\ref{armijo})}
{$\alpha \gets 1$.}
{
Find $\alpha_{init} = \argmin_{\delta < \alpha \leq 1} f(\vbeta + \alpha \Delta \vbeta)$. \label{alg:d-glmnet-line-search-alpha-init} \;
Armijo rule: let $\alpha$ be the largest element of the sequence $\{\alpha_{init} b^j\}_{j=0,1,...}$ satisfying
\begin{align}
\label{armijo}
f(\vbeta + \alpha \Delta \vbeta) & \leq f(\vbeta) + \alpha \sigma D, \\
D = \nabla L(\vbeta)^T \Delta \vbeta + & \gamma \Delta \vbeta^T (\mu(\tilde{H}(\vbeta) + \nu I)) \Delta \vbeta + R(\vbeta + \Delta \vbeta) - R(\vbeta). \notag
\end{align}
}
\BlankLine
\Output{$\alpha$.}

\end{algorithm}


\section{Ensuring sparsity}
\label{sec:large_mu}

Applying $\mu > 1$ is required for providing sparse solution in case of $L_1$ regularization.
Sparsity may suffer from the line search.
Algorithm \ref{alg:d-glmnet-high} starts with $\vbeta = 0$, so absolute values of $\vbeta$ tend to increase.
However there may be cases when  $\Delta \beta_j = -\beta_j$ for some $j$ on step \ref{d-glmnet-high:step} of Algorithm \ref{alg:d-glmnet-high}, so $\beta_j$ can go back to $0$.
In that case, if line search on step \ref{d-glmnet-high:alpha} selects $\alpha < 1$,  then the opportunity for sparsity is lost.
Parallel steps over blocks of weights $\Delta \vbeta^m$ come in conflict and for some datasets Algorithm \ref{alg:d-glmnet-line-search} selects $\alpha < 1$ almost always.

To guarantee sparsity of the solution an algorithm must select step size $\alpha = 1$ often enough.
In Appendix \ref{app:large_mu} we prove that when $\mu \ge \frac{\Lambda_{max}}{(1 - \sigma)\lambda_{min}}$ the line search is not required at all. Here $\Lambda_{max}, \lambda_{min}$ are maximal and minimal eigenvalues of $H(\vbeta)$ and $\tilde{H}(\vbeta)$ respectively.
However it is hard to compute $\Lambda_{max}, \lambda_{min}$ for an arbitrary dataset. Also $\mu \ge \frac{\Lambda_{max}}{(1 - \sigma)\lambda_{min}}$ may yield very small steps and slow convergence.
For his reason in \texttt{d-GLMNET} the $\mu$ parameter is changed adaptively, see Algorithm \ref{alg:d-glmnet-high}.
In numerical experiments we used $\eta_1 = \eta_2 = 2$.

%
%
%
%
%
%

Note that line search always yields $\alpha = 1$ when $\mu \ge \frac{\Lambda_{max}}{(1 - \sigma)\lambda_{min}}$, thus adaptive algorithm preserves $1 \le \mu < \frac{\eta_1
 \Lambda_{max}}{(1 - \sigma)\lambda_{min}}$.

Alternatively problem (\ref{quad-appr-mod}) with $\mu > 1$ can be interpreted as minimization of the Lagrangian for the constrained optimization problem
\begin{gather*}
\argmin_{\Delta \vbeta} \left\{ L(\vbeta) + \nabla L(\vbeta)^T \Delta \vbeta + \frac{1}{2} \Delta \vbeta^T (\tilde{H}(\vbeta) + \nu I) \Delta \vbeta  + R(\vbeta + \Delta \vbeta) \right\} \\
\text{subject to:} \quad \Delta \vbeta^T (\tilde{H}(\vbeta) + \nu I) \Delta \vbeta \le r,
\end{gather*}
%
with Lagrange multiplier $\mu - 1$.
Steps are constrained to the iteration specific trust region radius $r$.


\section{Convergence}
\label{sec:convergence}

Algorithm \texttt{d-GLMNET} falls into the general framework of block-coordinate gradient descent (CGD) proposed by Tseng and Yun \cite{Tseng2007}, which we briefly describe here.
CGD is about minimization of a sum of a smooth function and separable convex function  (\ref{target});  in our case, negated log-likelihood and elastic net penalty.
At each iteration CGD solves penalized quadratic approximation problem
\begin{align}
\label{cgd-quad}
\argmin_{\Delta \vbeta} \left\{ L(\vbeta) + \nabla L(\vbeta)^T \Delta \vbeta + \frac{1}{2} \Delta \vbeta^T H \Delta \vbeta + R(\vbeta + \Delta \vbeta) \right\},
\end{align}
where  $H$ is some positive definite matrix, possibly iteration specific. For convergence it also requires that for some $e_{min}, e_{max} > 0$ for all iterations
\begin{equation}
\label{assumption}
e_{min} I \preceq H \preceq e_{max} I.
\end{equation}
At each iteration updates are done over some subset of weights.
After that a line search by the Armijo rule should be conducted.
If all weights are updated every $T \ge 1$ consecutive iterations then CGD converges globally.
Tseng and Yun \cite{Tseng2007} prove also that if $L(\vbeta)$ is strictly convex and weights are updated by a proper schedule (particularly by updating all weights at each iteration, which is our case) then $f(\vbeta)$ converges as least Q-linearly and $\vbeta$ converges at least R-linearly,

Firstly, let us show for which loss functions (\ref{assumption}) holds.
From loss function convexity follows $0 \preceq \nabla^2 L(\vbeta)$.
If the second derivative of loss function is bounded
\begin{equation}
\label{second_der}
\frac{\partial^2 \ell(y, \hat{y})}{\partial \hat{y}^2} < M,
\end{equation}
then
\begin{align*}
\va^T \nabla^2 L(\vbeta) \va & = \sum_{i=1}^n (\va^T \vx_i ) \frac{\partial^2 \ell(y_i, \vbeta^T x_i)}{\partial \hat{y}^2} (\va^T \vx_i)
\le \|\va \|^2 \sum_{i=1}^n \| \vx_i \|^2 M,
\end{align*}
and we conclude that for some $\Lambda_{min}, \Lambda_{max} > 0$
\begin{equation}
\label{full-hessian-limit}
\Lambda_{min} I \preceq \nabla^2 L(\vbeta) + \nu I \preceq \Lambda_{max} I.
\end{equation}
The assumption (\ref{second_der}) holds for logistic and probit regressions, and for linear regression with squared loss, see Appendix \ref{app:second_der}.


Secondly, let us prove (\ref{assumption}) for a block-diagonal approximation  $H = \mu (\tilde{H}(\vbeta) + \nu I)$, where $\tilde{H}(\vbeta)$ is defined in (\ref{tilde-h}).
Denote its diagonal blocks by $H^1,...,H^M$
and represent an arbitrary vector $\va$ as a concatenation of subvectors of corresponding size:
$\va = ((\va^1)^T,...,(\va^M)^T)^T$.
Then we have
$$
\va^T H \va = \sum_{m=1}^{M} (\va^m)^T H^m \va^m.
$$
Notice that $H^m = \mu (\nabla^2 L(\vbeta^m) + \nu I)$, where $\nabla^2 L(\vbeta^m)$ is a Hessian over the subset of features $S^m$.
So for each $(\nabla^2 L(\vbeta^m) + \nu I)$ we have
\begin{equation*}
\Lambda^m_{min} I \preceq \nabla^2 L(\vbeta^m) + \nu I \preceq \Lambda^m_{max} I, \quad \text{for} \; m=1, \myldots ,M.
\end{equation*}
That means
\begin{equation}
\label{block-limits}
\mu \Lambda^m_{min} \|\vx^m\|^2 \leq (\va^m)^T H^m \va^m \leq \mu \Lambda^m_{max} \|\va^m\|^2, \quad \text{for} \; m=1, \myldots ,M.
\end{equation}
Let
$$\lambda_{min} = \min_{m=1, \myldots ,M} \Lambda^m_{min}, \qquad \lambda_{max} = \max_{m=1, \myldots ,M} \Lambda^m_{max}.$$
Assume $1 \le \mu \le \mu_{max}$, which holds for constant or adaptively changing $\mu$ (Section \ref{sec:large_mu}), and by summing (\ref{block-limits}) up over $m$ we obtain the required
\begin{gather*}
\lambda_{min} \|\va\|^2 \leq \va^T H \va \leq \mu_{max} \lambda_{max} \|\va\|^2.
\end{gather*}

\section{Architecture for distributed training}
\label{software-implementation}

In this section we describe details or software implementation of parallel coordinate descent.
This implementation works in a distributed settings (multiple computational nodes).
When parallel coordinate descent is concerned the natural way is to split training data set by features (``vertical'' splitting).
Splitting of input features leads to splitting the matrix $X$ of features into $M$ parts $X^m$.
The node $m$ stores the part $X^m$ of training dataset corresponding to a subset $S^m$ of input features. Let
$$
\vbeta = (  (\vbeta^1)^T, \myldots, (\vbeta^M)^T)^T.
$$
Algorithm \ref{alg:program} presents a high-level structure of our approach.
Each node makes a step $\Delta \vbeta^m$  over its block of variables $S^m$.
Then all these steps are summed up and multiplied by a proper step size multiplier.

\begin{algorithm}[t]

\caption{Distributed coordinate descent.}
\label{alg:program}

\DontPrintSemicolon
\SetAlgoVlined
\SetKwInOut{Input}{Input}\SetKwInOut{Output}{Return}

\Input{training dataset, $\lambda_1$, $\lambda_2$, feature splitting $S^1,\myldots,S^M$.}
\BlankLine

\While {not converged}{
\quad Do in parallel over $M$ nodes: \;
\quad \quad Read part of training dataset $X^m$ sequentially. \;
\quad \quad Find updates $\Delta \vbeta^m$ and $X^m \Delta \vbeta^m$ for weights in $P^m \subseteq S^m$. \label{alg-step:beta-update} \;
\quad \quad Sum up vectors $X^m \Delta \vbeta^m $ using MPI\_AllReduce: \;
\quad \quad \quad $X \Delta \vbeta \gets \sum_{m=1}^{M} X^m \Delta \vbeta^m$. \label{alg-step:delta-sum} \;
\quad \quad Find step size $\alpha$ using line search (Algorithm \ref{alg:d-glmnet-line-search}). \label{alg-step:linear-search} \;
\quad \quad $\vbeta^m \gets \vbeta^m + \alpha \Delta \vbeta^m$.  \;
\quad \quad $X \vbeta \gets X \vbeta + \alpha X \Delta \vbeta$. \;
}
\BlankLine
\Output{$\vbeta$.}

\end{algorithm}

Algorithm \ref{alg:program} has several key features:
\begin{enumerate}

\item The weights vector $\vbeta$ is stored in the distributed manner across all nodes; a node $m$ stores $\vbeta^m$.

\item The program stores in RAM only vectors $\vy$, $X \vbeta$ \footnote{For a particular case of logistic regression one can store vector $\exp(X \vbeta)$ instead of $X \vbeta$ to speed-up computations.}, $X \Delta \vbeta$, $\vbeta^m$, $\Delta \vbeta^m$.

Thus the memory footprint at node $m$ is $3n + 2|S^m|$.

\item The program maintains vector $X \vbeta$ synchronized across all nodes after each iteration. Synchronization is done by means of summation $ X^m \Delta\vbeta^m$ on step \ref{alg-step:delta-sum}. The total communication cost is $Mn$.

\item At each iteration a subset $P^m \subseteq S^m $ of weights is updated. In Section \ref{sec:alb} we describe two subset selection strategies.

\item Various types of coordinate-wise updates can be implemented on step \ref{alg-step:beta-update}.
Our update is described in Section \ref{sec:math}.
This is done by one pass over the training dataset part $X^m$.

\item
The program reads training dataset sequentially from disk instead of RAM. It may slow down the program in case of smaller datasets, but it makes the program more scalable.
Also it conforms to the typical pattern of a multi-user Map/Reduce cluster system: large disks, many jobs started by different users are running simultaneously.
Each job might process large data but it is allowed to use only a small part of RAM at each node.

\item Doing a linear search on step \ref{alg-step:linear-search} requires calculating the log-likelihood $L(\vbeta + \alpha \Delta \vbeta)$ and $R(\vbeta + \alpha \Delta \vbeta)$ for arbitrary $\alpha \in (0, 1]$. Since the vector $X \vbeta$ is synchronized between nodes, the log-likelihood can be easily calculated.
Each node calculates the regularizer $R(\vbeta^m)$ separately and then the values are summed up via MPI\_AllReduce \footnote{We used an implementation from the Vowpal Wabbit project\\  $https://github.com/JohnLangford/vowpal\_wabbit$}.
This could be done for separable regularizers like $L_1, L_2$, group lasso, SCAD, e.t.c.

\item We use Single Program Multiple Data (SPMD) programming style, i.e., all computational nodes execute the same code and there's no selected master.
 Some operations (like linear search on step \ref{alg-step:linear-search}) are redundantly executed on all nodes.
\end{enumerate}

Typically most datasets are stored in ``by example'' form, so a transformation to ``by feature'' form is required for distributed coordinate descent.
For large datasets this operation is hard to do on a single node.
We use a Map/Reduce cluster \cite{Dean2004} for this purpose.
Partitioning of the training dataset over nodes is done by means of a Reduce operation in the streaming mode.
We did not implement parallel coordinate descent completely in the Map/Reduce programming model since it is ill-suited for iterative machine learning algorithms \cite{Low2010, Agarwal2011}.
Using other programming models like Spark \cite{Zaharia2010} looks promising.


\section{Adaptive selection of subset to update}
\label{sec:alb}
Algorithm \ref{alg:program} is flexible in selecting the subset of weights $P^m \subseteq S^m$, which are updated on step \ref{alg-step:beta-update}.
The simplest strategy is to update always all weights $P^m = S^m$. In this case the algorithm becomes deterministic; each node performs predefined computation before synchronization.
This is a case of a Bulk Synchronous Parallel (BSP) programming model. Programs based on BSP model share a common weak point: the whole program performance is limited by the slowest worker because of the synchronization step. Performance characteristics of nodes in a cluster may be different due to several reasons: competition for resources with other tasks, different hardware, bugs in optimization settings e.t.c \cite{Dean2004}.
In context of distributed data processing this problem is known as a ``slow node problem''.

In Map/Reduce clusters the slow node problem is typically solved by ``backup tasks'' \cite{Dean2004}, when the scheduler starts copies of the slowest task in a job on alternative nodes.
The completion of any copy is sufficient for the whole job completion.
However this mechanism isn't applicable for algorithms maintaining state, which is our case:
algorithm \texttt{d-GLMNET} maintains state as vectors $X \vbeta, \vbeta^m$.

Many machine learning systems try to overcome this weak point by moving to asynchronous computations. Ho et al. \cite{Ho2013} develop a Stale Synchronous Parallel Parameter Server (SSPPS), which allows the fastest and the slowest node to have some gap not exceeding a fixed number of iterations. The Fugue system \cite{Kumar2014} allows fast nodes to make extra optimization on its subset of the training dataset while waiting for slow ones.
The Y!LDA \cite{smola2010architecture} architecture for learning topic models on the large scale keeps global state in a parameter server, and each node updates it asynchronously.

Our program resolves ``slow node'' problem by selecting the subset $P^m$ adaptively. The program has an additional thread checking how many nodes have already done update over all weights in $S^m$ (Algorithm \ref{alg:program}, step \ref{alg-step:beta-update}). If the fraction of such nodes is greater then $\kappa M$ with some $0 < \kappa < 1$, then all nodes break the optimization and proceed to synchronization (Algorithm \ref{alg:program}, step \ref{alg-step:delta-sum}). Updates of weights in $S^m$ are done cyclically, so on the next iteration a node resumes optimization starting from the next weight in $S^m$. Fast nodes are allowed to make more then one cycle of optimization, i.e. make two or more updates of each weight. We used $\kappa = 0.75$ in all numerical experiments.

We call this mechanism ``Asynchronous Load Balancing'' (ALB) and the algorithm modified in this way - \texttt{d-GLMNET-ALB}.
A possible drawback of this mechanism is that a very slow node may not be able to update every weight even after many iterations. However, we have never observed such an extreme situation in practice.

\texttt{d-GLMNET-ALB} algorithm converges globally, just like \texttt{d-GLMNET}.
However, linear convergence cannot be established because \texttt{d-GLMNET-ALB} updates blocks of weights in an non-deterministic order, so it doesn't fit schedule requirements specified in \citep{Tseng2007}.

\section{Numerical experiments}
\label{sec:numerical}

In this section we evaluate the performance of \texttt{d-GLMNET} for an important particular case: logistic regression with $L_1$ and $L_2$ regularization.

Firstly, we demonstrate the effect of adaptive $\mu$ in the Hessian approximation. This strategy was used only for experiments with $L_1$ regularization.
With $L_2$ regularization we used constant $\mu = 1$.
Secondly, we perform comparison with three state-of-the-art approaches:  ADMM, distributed ``online learning via truncated gradient'' (for $L_1$ regularization), combination of distributed online learning with L-BFGS (for $L_2$ regularization). We briefly describe these approaches below. For $L_1$ regularization, the sparsity of the solution is another matter of interest. We also evaluated the effect of the ``Asynchronous Load Balancing'' (ALB) technique.
Thirdly, we show how the performance of \texttt{d-GLMNET-ALB} improves with the increase of number of computing nodes.

%

\subsection{Competing algorithms}

The first approach for comparison is an adaptation of ADMM for $L_1$-regularized logistic regression. We implemented the algorithm from \cite[sections 8.3.1, 8.3.3]{Boyd2010}\footnote{The update rule for $\bar{z}^k$ in \cite[section 8.3.3]{Boyd2010} has an error. Instead of $(\rho/2)$ should be $(\rho N/2)$. The ADMM algorithm performed poorly before we fixed it.
}. It uses a sharing technique \cite[section 7.3]{Boyd2010} to distribute computations among nodes. The sharing technique requires dataset being split by features.
Like our algorithm, it stores weights $\vbeta$ in a distributed manner. We used MPI\_AllReduce to sum up $Ax^k$ and implemented a lookup-table proposed in \cite[section 8.3.3]{Boyd2010} to speed up \mbox{$z-$update}.
Doing an \mbox{$x-$update} involves solving a large scale LASSO. We used a Shooting \cite{Fu1998} to do it since it is well suited for large and sparse datasets.
Shooting algorithm is based on coordinate descent. That is why this modification of ADMM can be viewed as another way to do distributed coordinate descent.
For each dataset we selected a parameter $\rho \in [4^{-3}, \myldots, 4^{3}]$ yielding best objective after 10 iterations and used it for final performance evaluation.


A combination of distributed online learning with L-BFGS was presented in \cite{Agarwal2011}.
An Algorithm 2 from \cite{Agarwal2011} describes a whole combined approach.
The first part of it proposes to compute a weighted average of classifiers trained at $M$ nodes independently via online learning.
The second part warmstarts L-BFGS with the result of the first part.
This combination has fast initial convergence (due to online learning) and fast local convergence (due to quasi-Newtonian updates of L-BFGS).

As we pointed out earlier, L-BFGS it not applicable for solving logistic regression with $L_1$-regularization.
Thus for experiments with $L_1$ regularization we used only distributed online learning, namely ``online learning via truncated gradient'' \cite{Langford2009}
.

For experiments with $L_2$ regularization we ran full Algorithm 2 from \cite{Agarwal2011}.
Both of these algorithms require training dataset partitioning by examples over $M$ nodes.
We used the online learning and L-BFGS implementation from the open source \texttt{Vowpal Wabbit (VW)} project\footnote{\url{https://github.com/JohnLangford/vowpal_wabbit}, version 7.5}.
We didn't use feature hashing since it may decrease the quality of the classifier.
As far as hyperparameters for online learning are concerned, we tested jointly learning rates (raging from $0.1$ to $0.5$) and powers of learning rate decay (raging from 0.5 to 0.9).
Then we selected the best combination for each dataset (yielding the best objective) and used it for further tests.

We would like to note that the \texttt{d-GLMNET} and \texttt{d-GLMNET-ALB} don't have any hyperparameters (except a regularization coefficient) and they are easier for practical usage.

\subsection{Datasets and experimental settings}
\begin{table}
\caption{Datasets summary.}
{
\begin{tabular}{|c|c|c|c|c|c|}
  \hline
  dataset & size & \#examples (train/test/validation) & \#features & nnz & avg nonzeros \\
  \colrule
  epsilon     &  12 Gb & $0.4 \times 10^6$ / $0.05 \times 10^6$ / $0.05 \times 10^6$  & $2000$               & $8.0 \times 10^8$  & 2000 \\
  webspam     &  21 Gb & $0.315 \times 10^6$ / $0.0175 \times 10^6$ / $0.0175 \times 10^6$ & $16.6 \times 10^6$ & $1.2 \times 10^9$  & 3727 \\
  yandex\_ad & 56 Gb & $57 \times 10^6$ / $2.35 \times 10^6$  / $2.35 \times 10^6$ & $35 \times 10^6$              & $5.7 \times 10^9$  & 100 \\
  \hline
\end{tabular}
}
\label{tbl:datasets}
\end{table}

We used three datasets for numerical experiments:

\begin{itemize}
\item \textbf{epsilon} - Synthetic dataset.
\item \textbf{webspam} - Webspam classification problem.
\item \textbf{yandex\_ad} - The click prediction problem - the goal is to predict the probability of click on the ad. This is a non-public dataset created from the user logs of the commercial search engine (Yandex).
\end{itemize}

Two of these sets - ``epsilon'' and ``webspam'' are publicly available from the Pascal Large Scale Learning Challenge 2008 \footnote{\url{http://largescale.ml.tu-berlin.de/}. We used preprocessing and train/test splitting from \\
\url{http://www.csie.ntu.edu.tw/~cjlin/libsvmtools/datasets/binary.html}. \\}.
We randomly split the original test sets into new test and validation sets.

The datasets are summarized in the Table \ref{tbl:datasets}.
Numerical experiments were carried out at cluster of multicore blade servers having Intel(R) Xeon(R) CPU E5-2660 2.20GHz, 32 GB RAM, connected by Gigabit Ethernet.
We used 16 nodes of a cluster in all numerical experiments.
Each node ran one instance of the algorithms.

We used Map/Reduce cluster to partition dataset by features over nodes. This was done by a Reduce operation in streaming mode using feature number as a key. Since Reduce operation assigns partitions to nodes by a hash of a key, the splitting of input features $S^1, \myldots, S^M$ was pseudo-random.
All algorithms requiring dataset partitioning by feature used the same partitioning.

Table \ref{tbl:performance} presents computational load on all nodes for each of the algorithms.

\begin{table}
\caption{Computational load of the algorithms.}

\begin{tabular}{|c|c|c|c|}
  \hline
  Algorithm & Iteration Complexity & Memory Footprint & Communication Cost \\
  \hline
  \begin{tabular}{c} Online learning \\ via truncated gradient \end{tabular} & $O(nnz)$ & $2Mp $ & $2Mp$ \\
  L-BFGS & $O(nnz)$ & $ 2rMp \footnote{The $r$ parameter specifies memory usage in L-BFGS. We used default value $r=15$.} $ & $ Mp $ \\
  d-GLMNET &  $O(nnz)$ & $3Mn + 2p$ & $Mn$\\
  ADMM & $O(nnz)$ & $5Mn + p$ & $ Mn$ \\
 \hline
\end{tabular}

\label{tbl:performance}
\end{table}

For each dataset we selected $L_1$ and $L_2$ regularization coefficients from the range $\{2^{-6}, \ldots, 2^{6}\}$ yielding the best classification quality on the validation set.
Then we ran the software implementations of all algorithms (\texttt{d-GLMNET}, ADMM, distributed ``online learning via truncated gradient'', L-BFGS) with the best hyperparameters
on all datasets.


To make evaluation less dependent on the current situation on the cluster, we repeated learning 9 times with each algorithm and selected a run with the median execution time. To study convergence profile of each algorithm we recorded the relative objective suboptimality and testing quality versus time. The optimal value of the objective function $f^*$ was approximately evaluated by running many iterations of \texttt{liblinear}\footnote{\url{http://www.csie.ntu.edu.tw/~cjlin/liblinear/}} program for ``epsilon'' and ``webspam'' datasets and \texttt{d-GLMNET} for the biggest ``yandex\_ad'' dataset.
Then relative objective suboptimality was calculated as $(f - f^* ) / f^*$, where $f$ is the current value of the objective function.
We used area under precision-recall curve (auPRC) as a testing quality measure (the definition is given in Appendix \ref{sec:auprc}).

Finally, we evaluated the influence of the number of computing nodes on the speed of \texttt{d-GLMNET-ALB}. Fig. \ref{fig:l1-speedup} and \ref{fig:l2-speedup} present execution times of the algorithm with various numbers of nodes relative to one node. Time was recorded when the algorithm came within $2.5\%$ of the optimal objective function value $f^*$.

\begin{figure}[t]
\begin{center}
        \subfigure[relative objective suboptimality vs. time] {
                \includegraphics[width=0.31\textwidth]{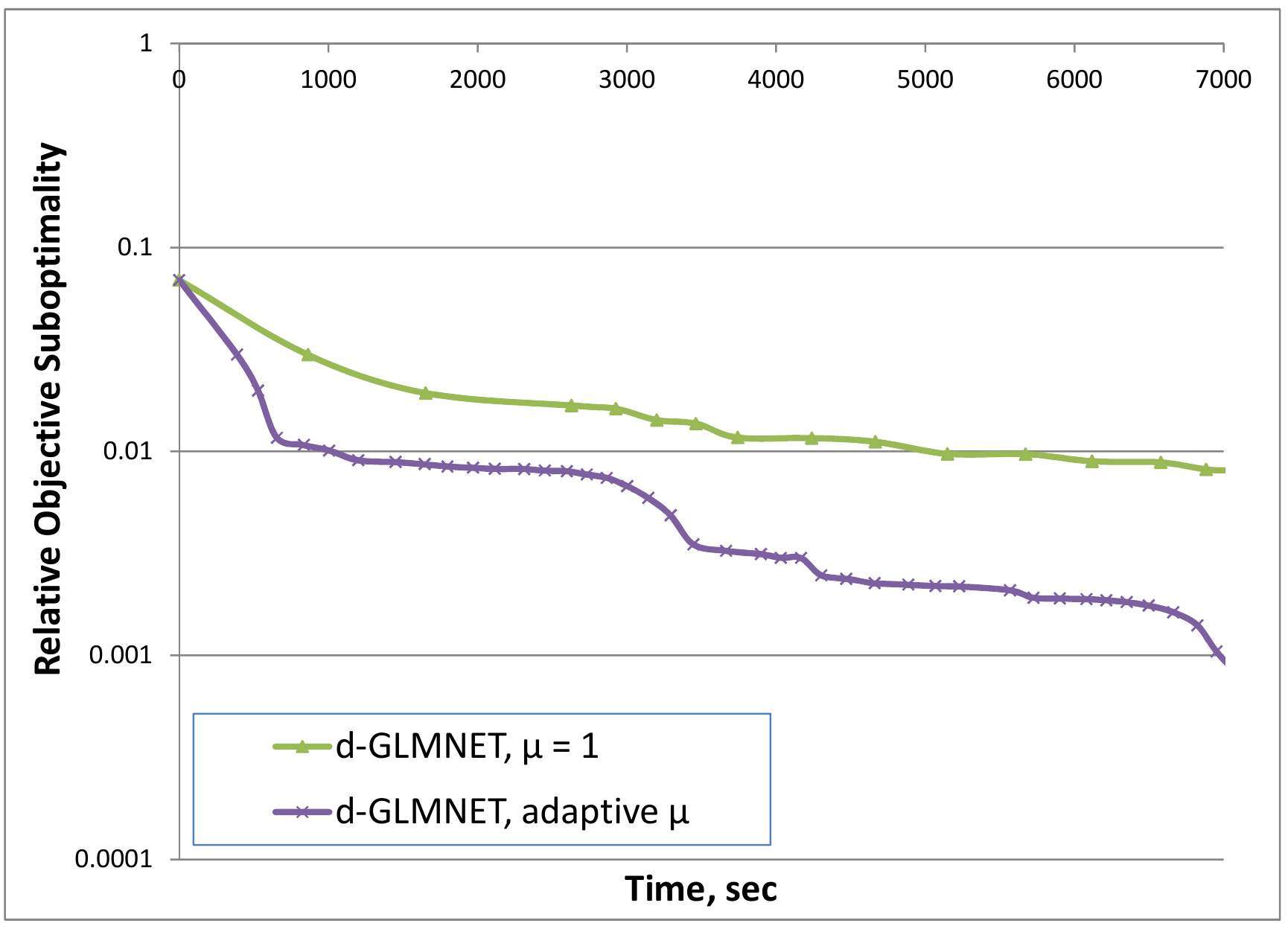}
        }
        \subfigure[testing quality (area under precision-recall curve) vs. time] {
                \includegraphics[width=0.31\textwidth]{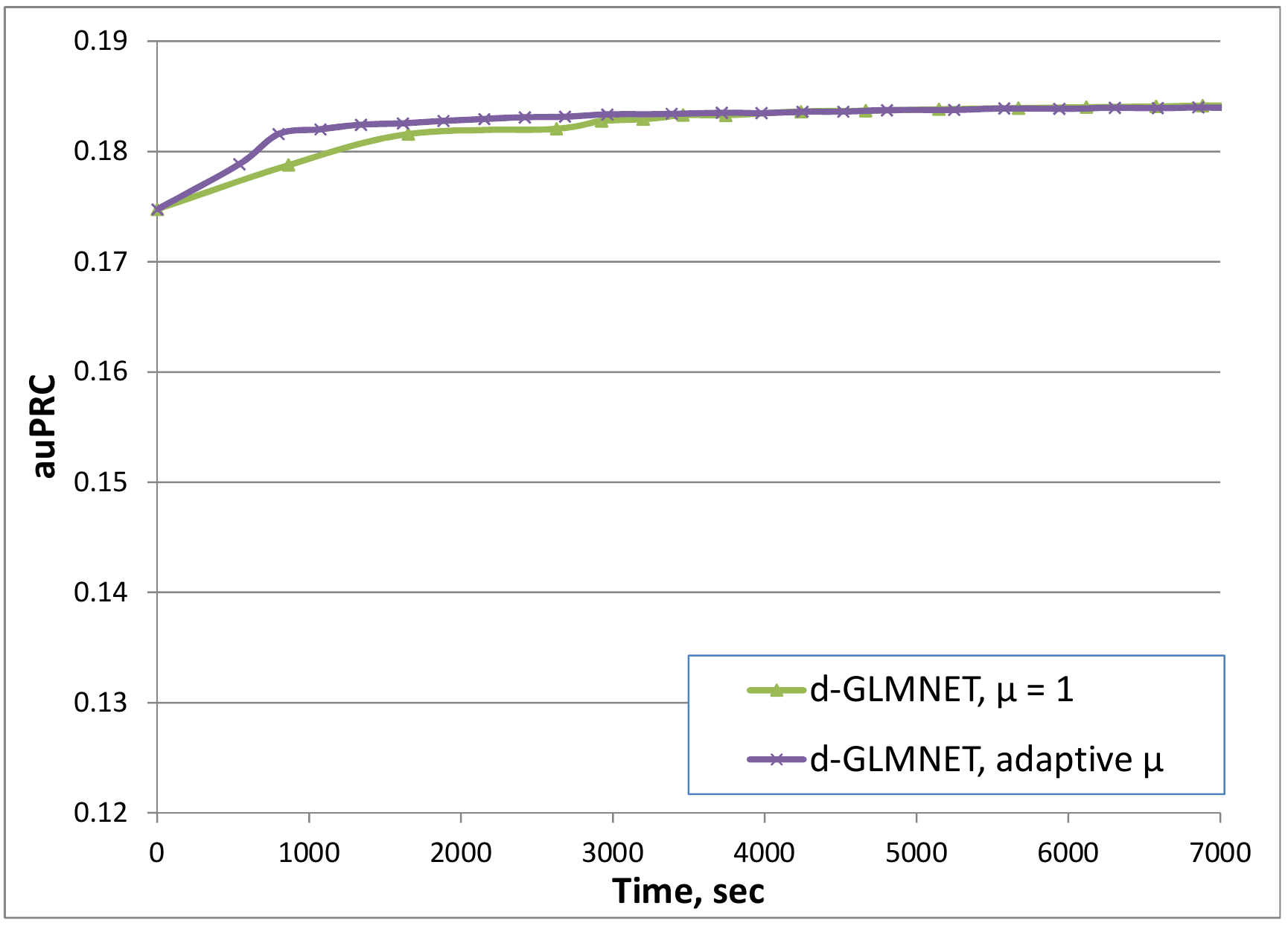}
        }
        \subfigure[number of non-zero weights vs. time] {
                \includegraphics[width=0.31\textwidth]{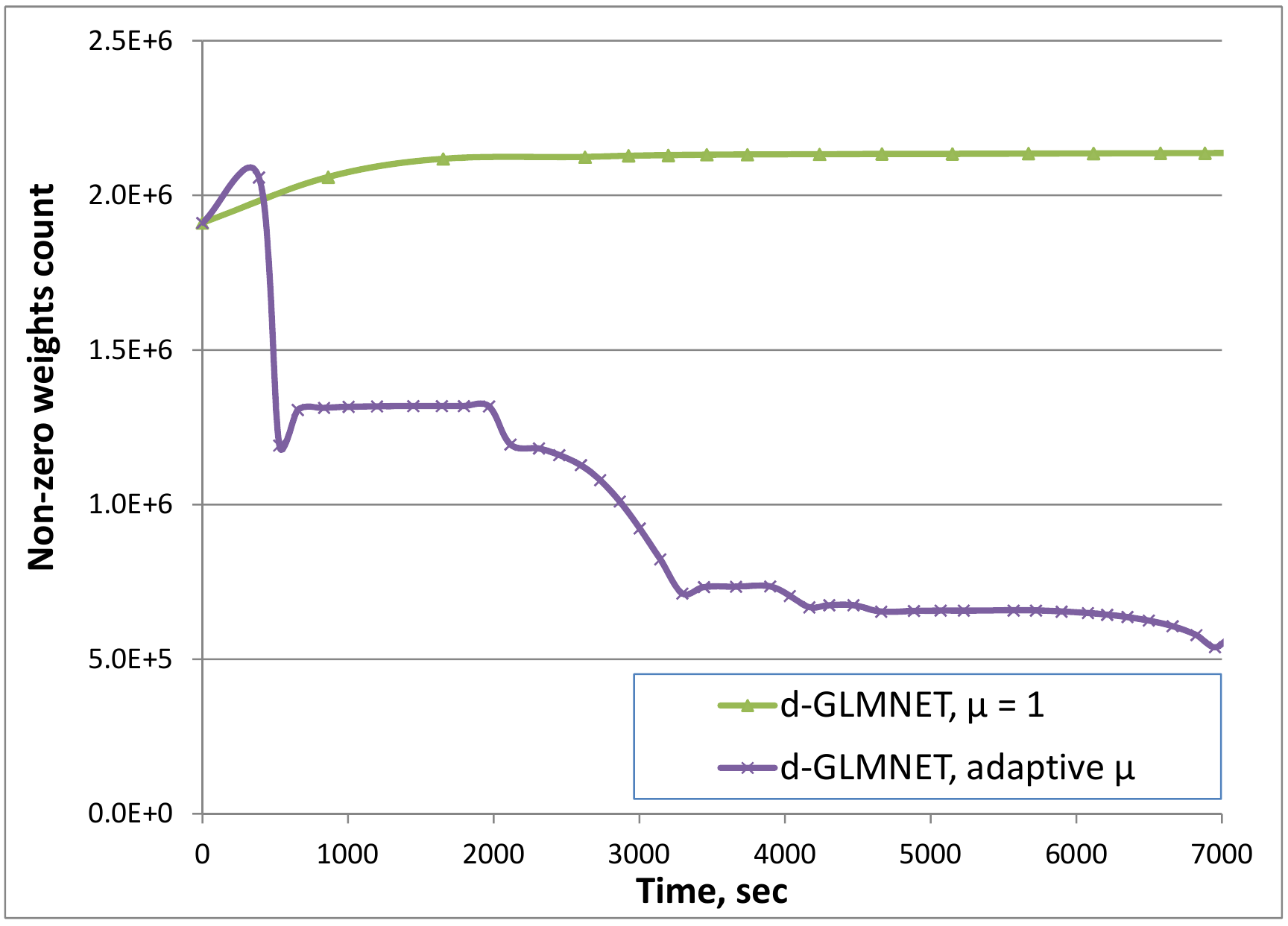}
        }
        \caption{Constant $\mu = 1$ vs. adaptive $\mu$ for yandex\_ad dataset, $L_1$ regularization.}
        \label{fig:yandex_ad-mu}
\end{center}
\end{figure}

\begin{figure}[t]
\begin{center}
        \subfigure[webspam] {
                \includegraphics[width=0.31\textwidth]{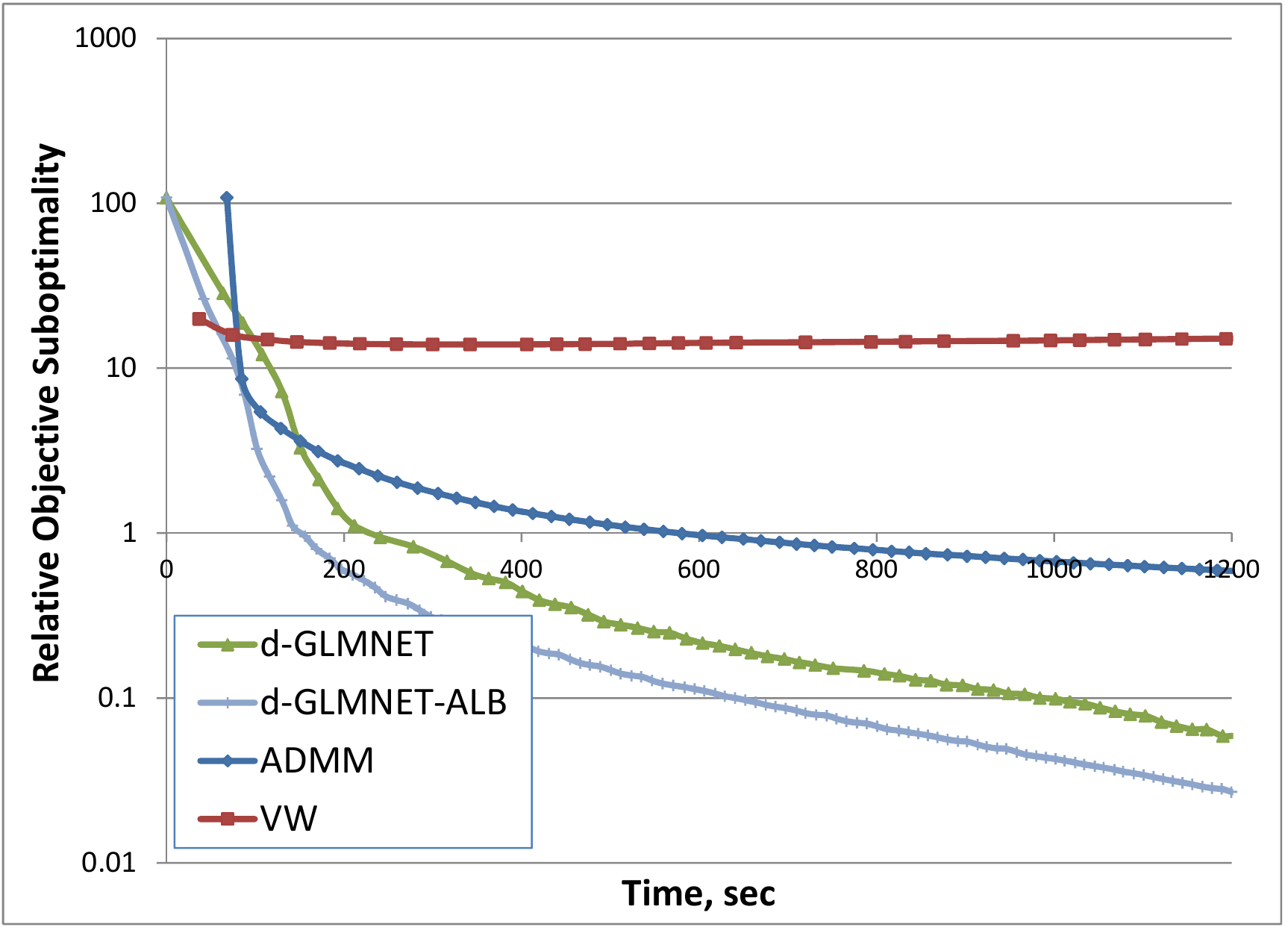}
        }
        \subfigure[yandex\_ad] {
                \includegraphics[width=0.31\textwidth]{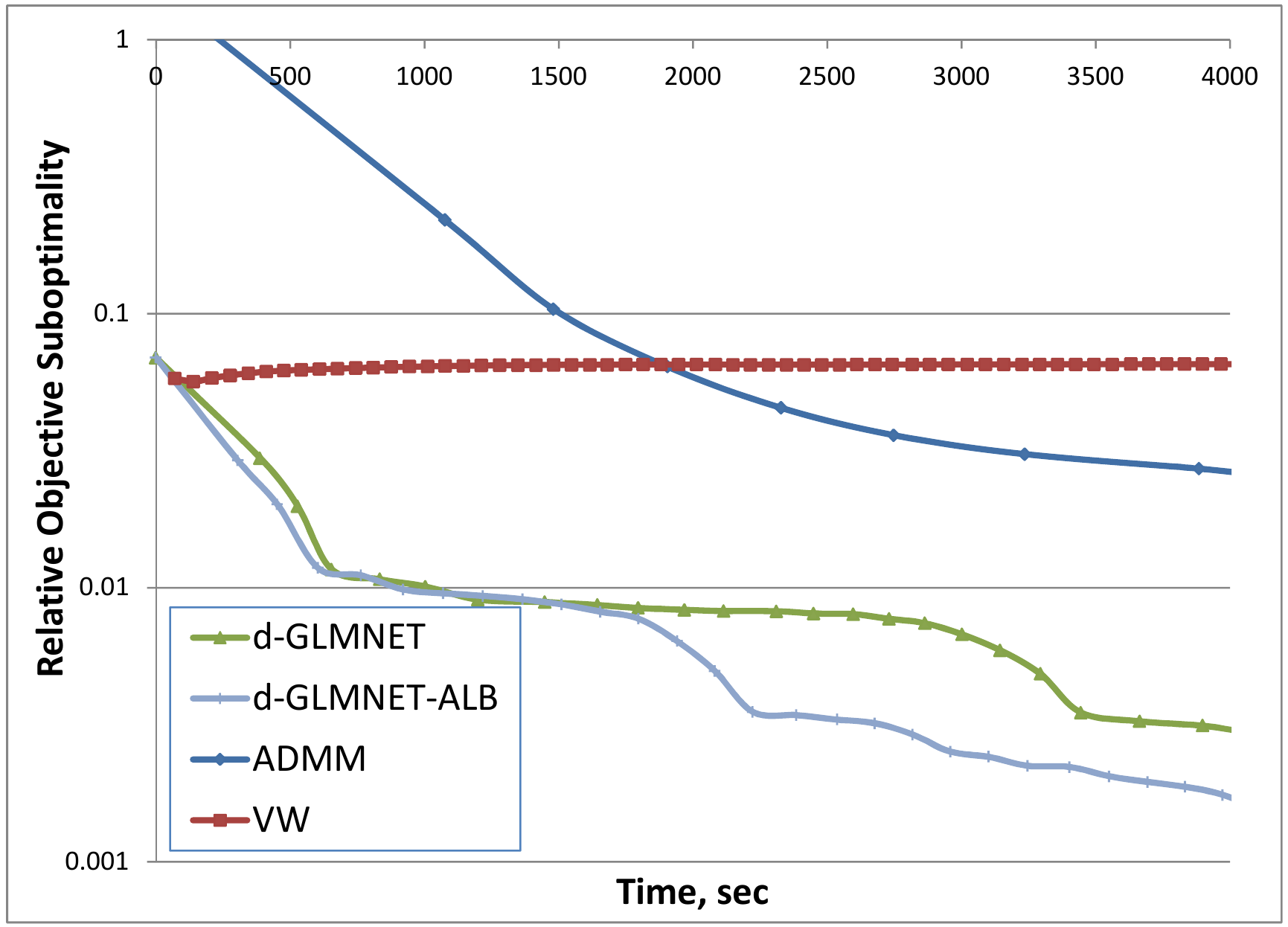}
        }
        \subfigure[epsilon] {
                \includegraphics[width=0.31\textwidth]{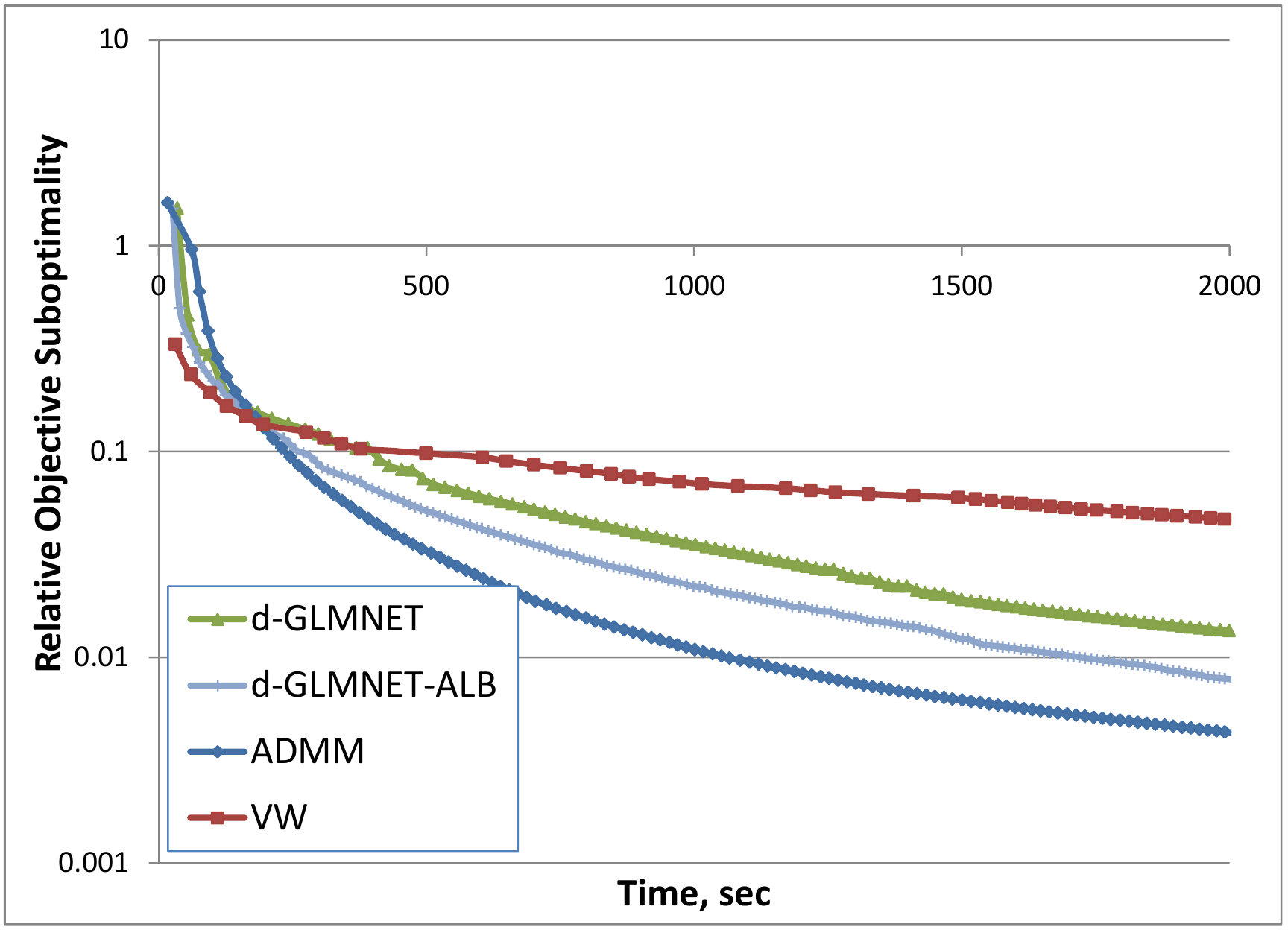}
        }
        \caption{$L_1$ regularization: relative objective suboptimality vs. time.}
        \label{fig:l1-target}
\end{center}
\begin{center}
        \subfigure[webspam] {
                \includegraphics[width=0.31\textwidth]{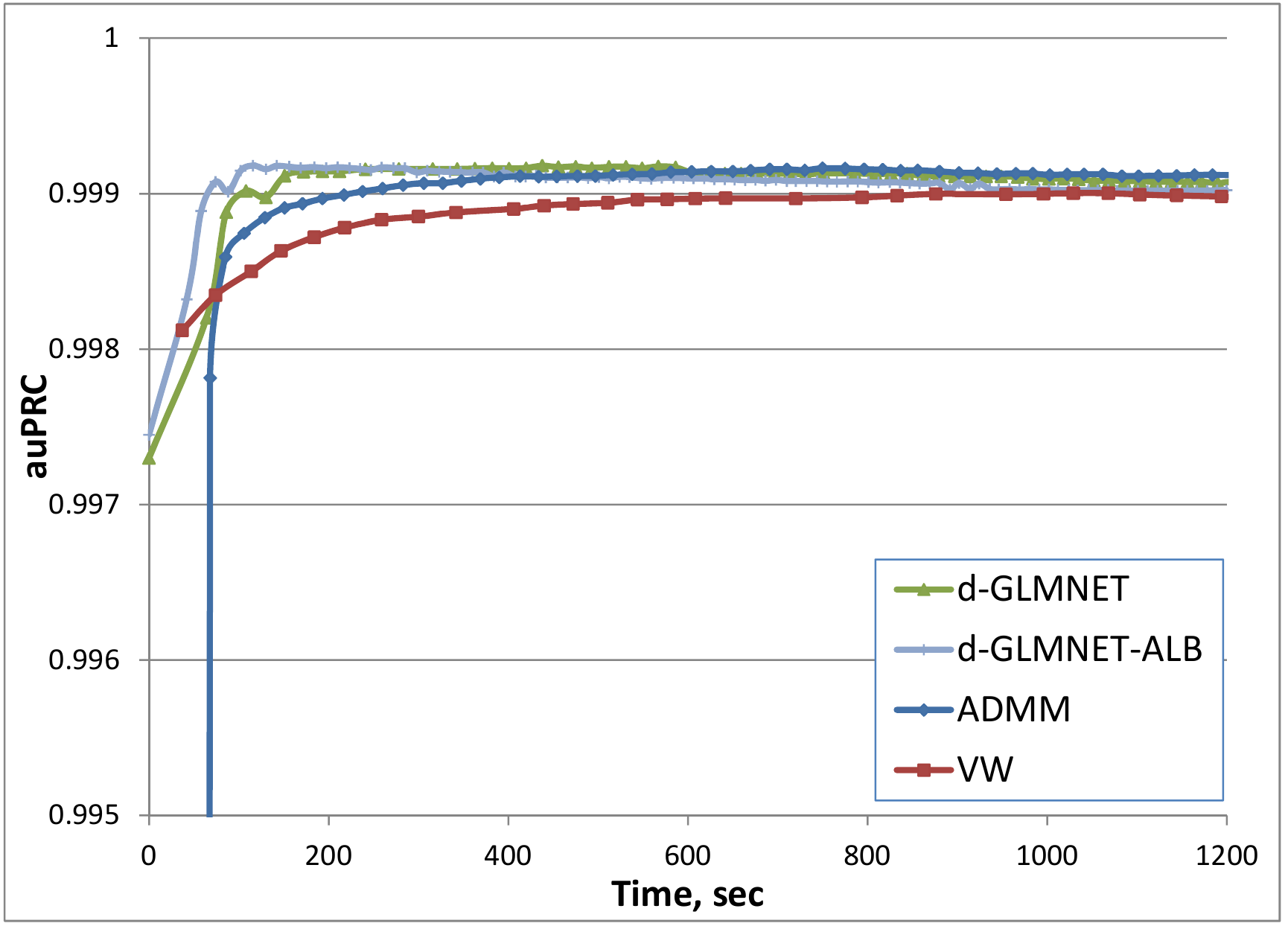}
        }
        \subfigure[yandex\_ad] {
                \includegraphics[width=0.31\textwidth]{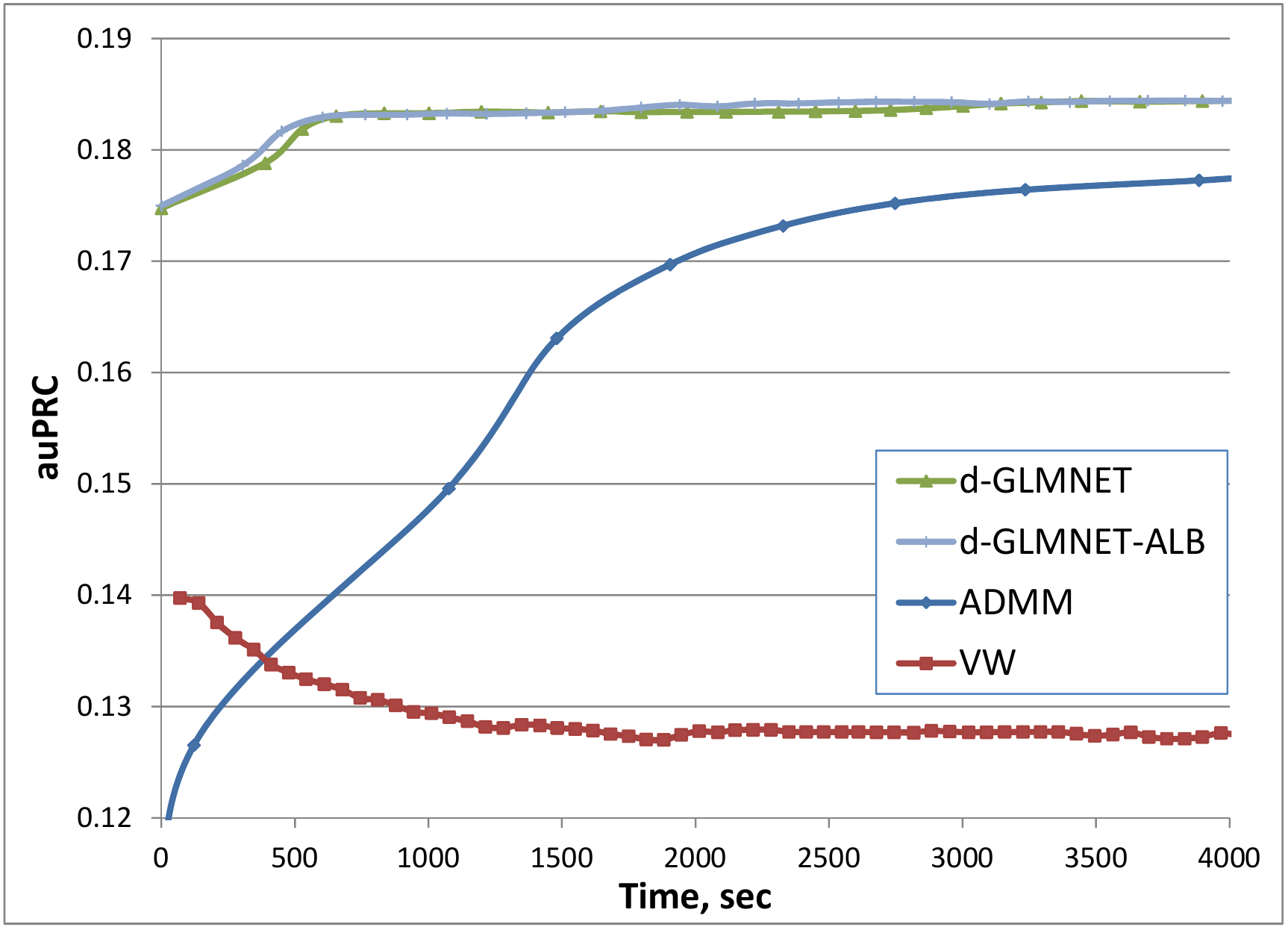}
        }
        \subfigure[epsilon] {
                \includegraphics[width=0.31\textwidth]{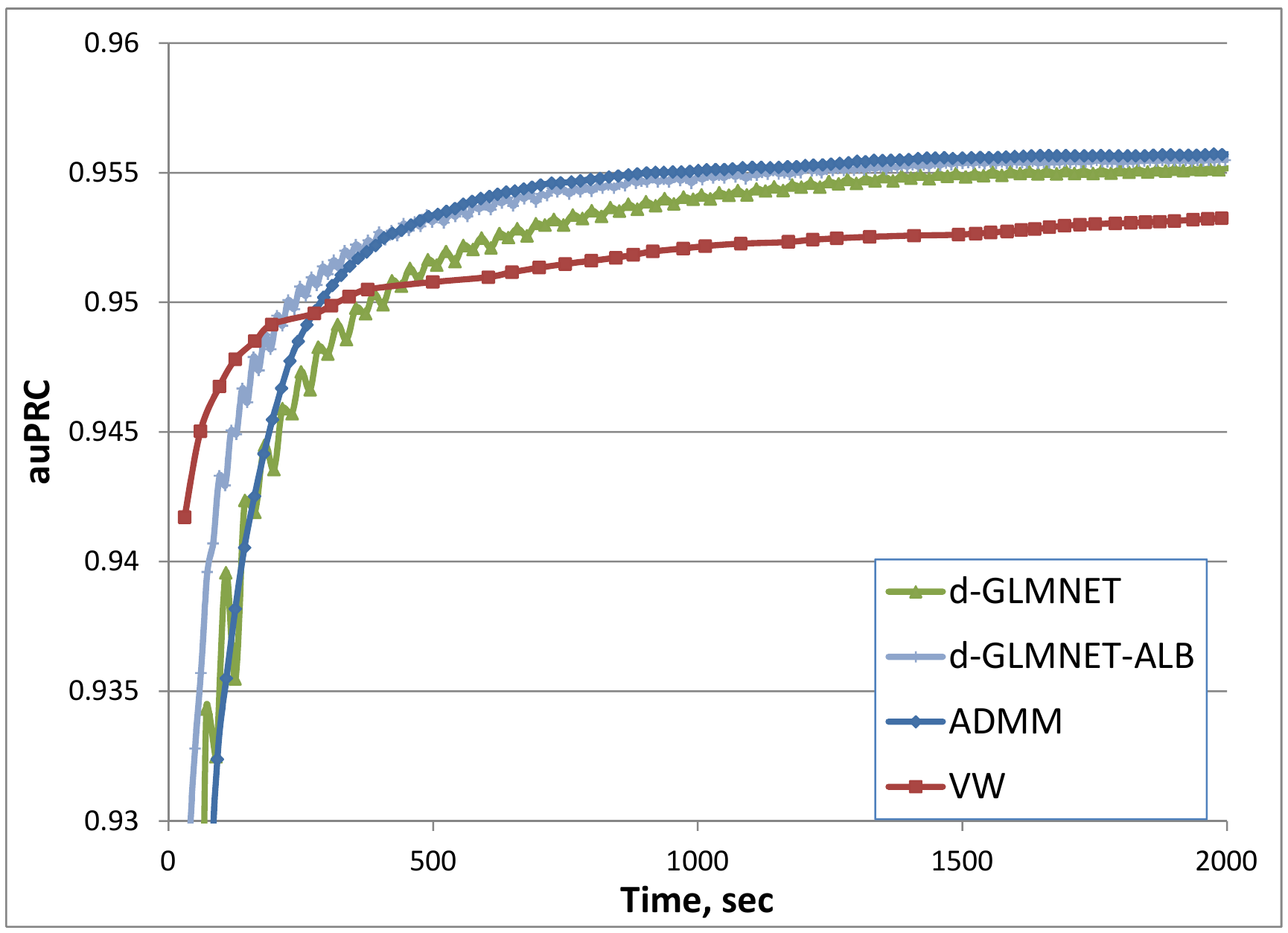}
        }
       \caption{$L_1$ regularization: testing quality (area under precision-recall curve) vs. time.}
       \label{fig:l1-testing}
\end{center}
\begin{center}
        \subfigure[webspam] {
                \includegraphics[width=0.31\textwidth]{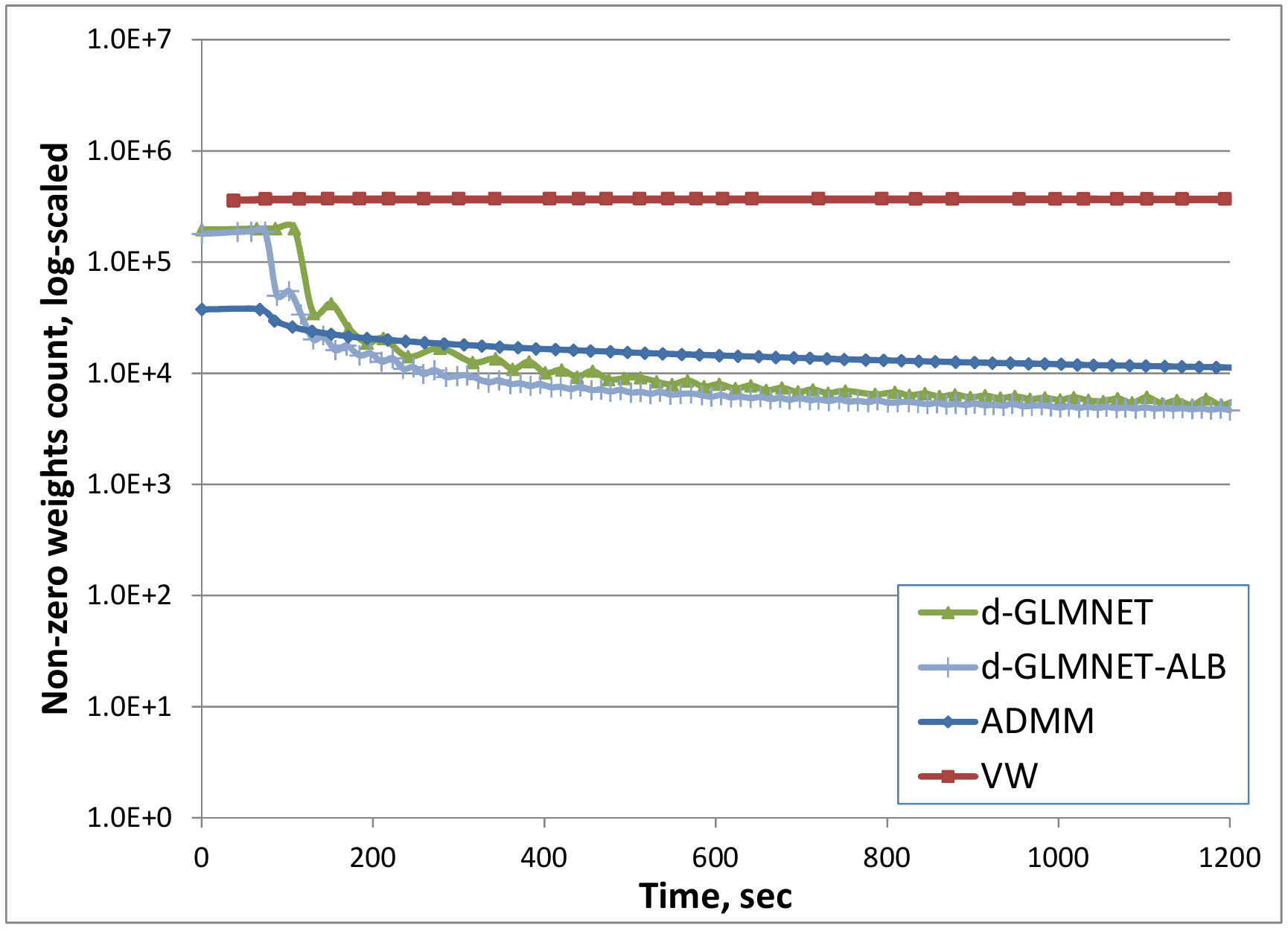}
        }
        \subfigure[yandex\_ad] {
                \includegraphics[width=0.31\textwidth]{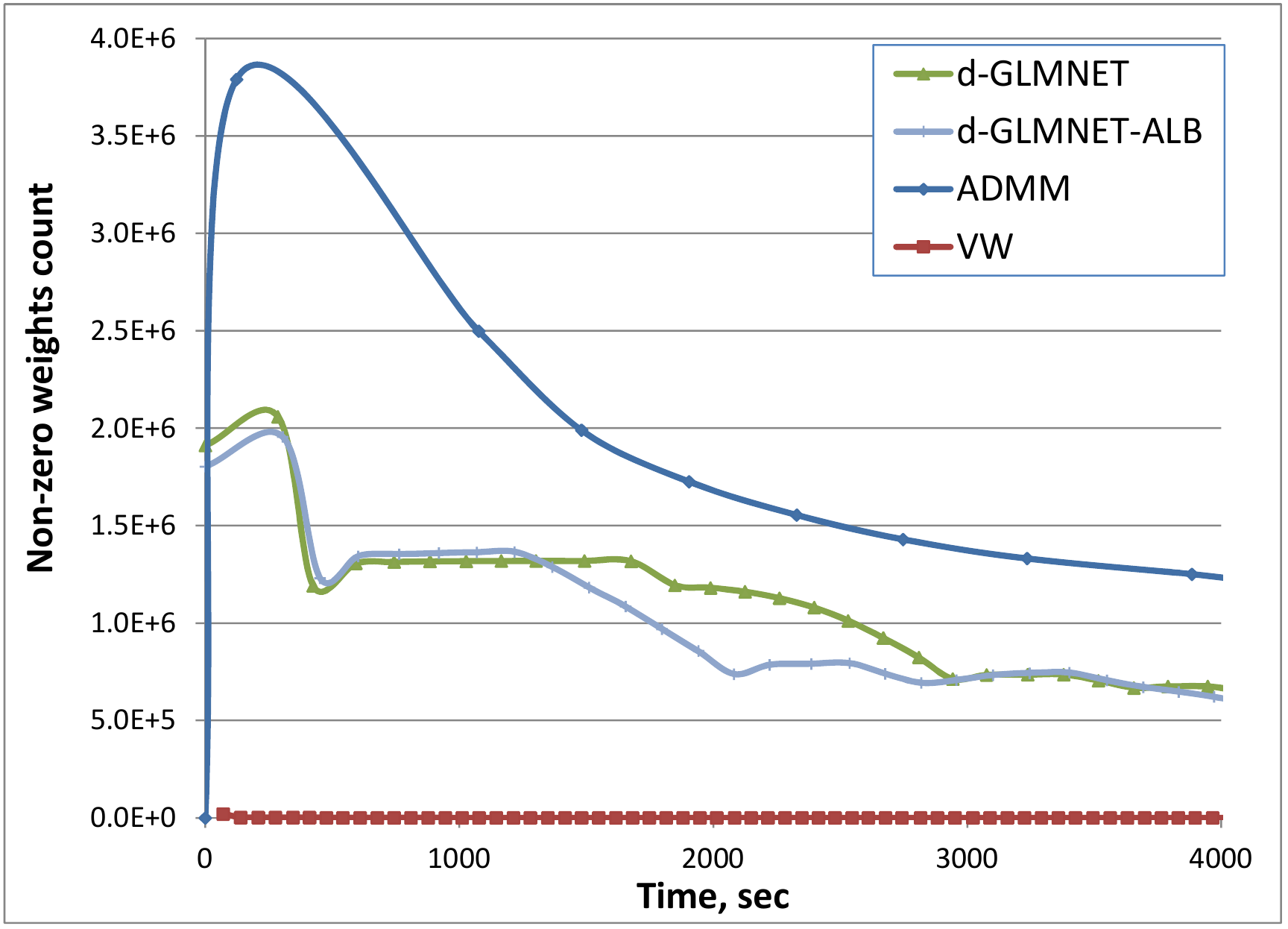}
        }
        \subfigure[epsilon] {
                \includegraphics[width=0.31\textwidth]{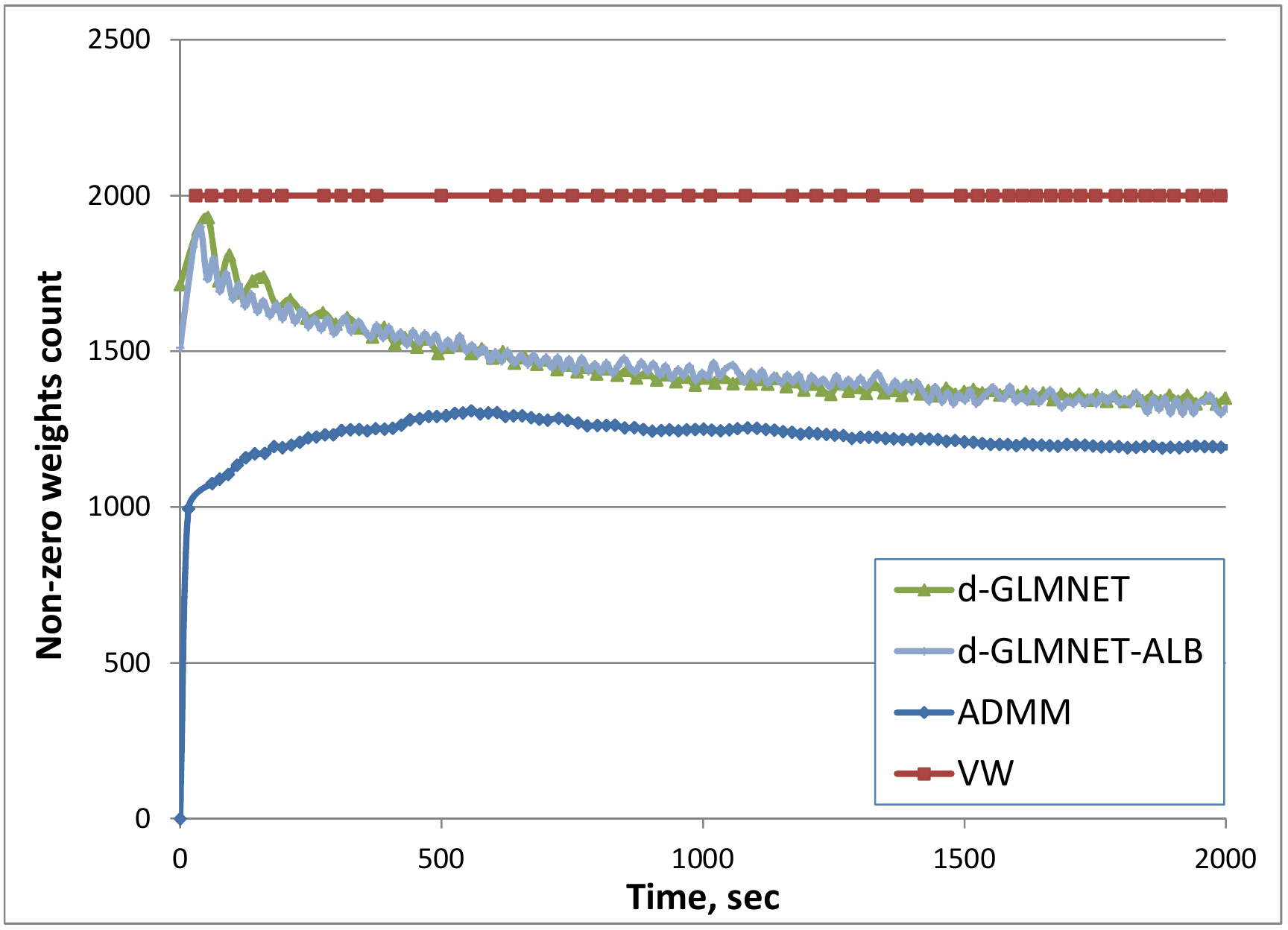}
        }
        \caption{$L_1$ regularization: number of non-zero weights vs. time.}
        \label{fig:l1-nnz}
\end{center}
\end{figure}

\begin{figure}[t]
\begin{center}
        \subfigure[webspam] {
                \includegraphics[width=0.31\textwidth]{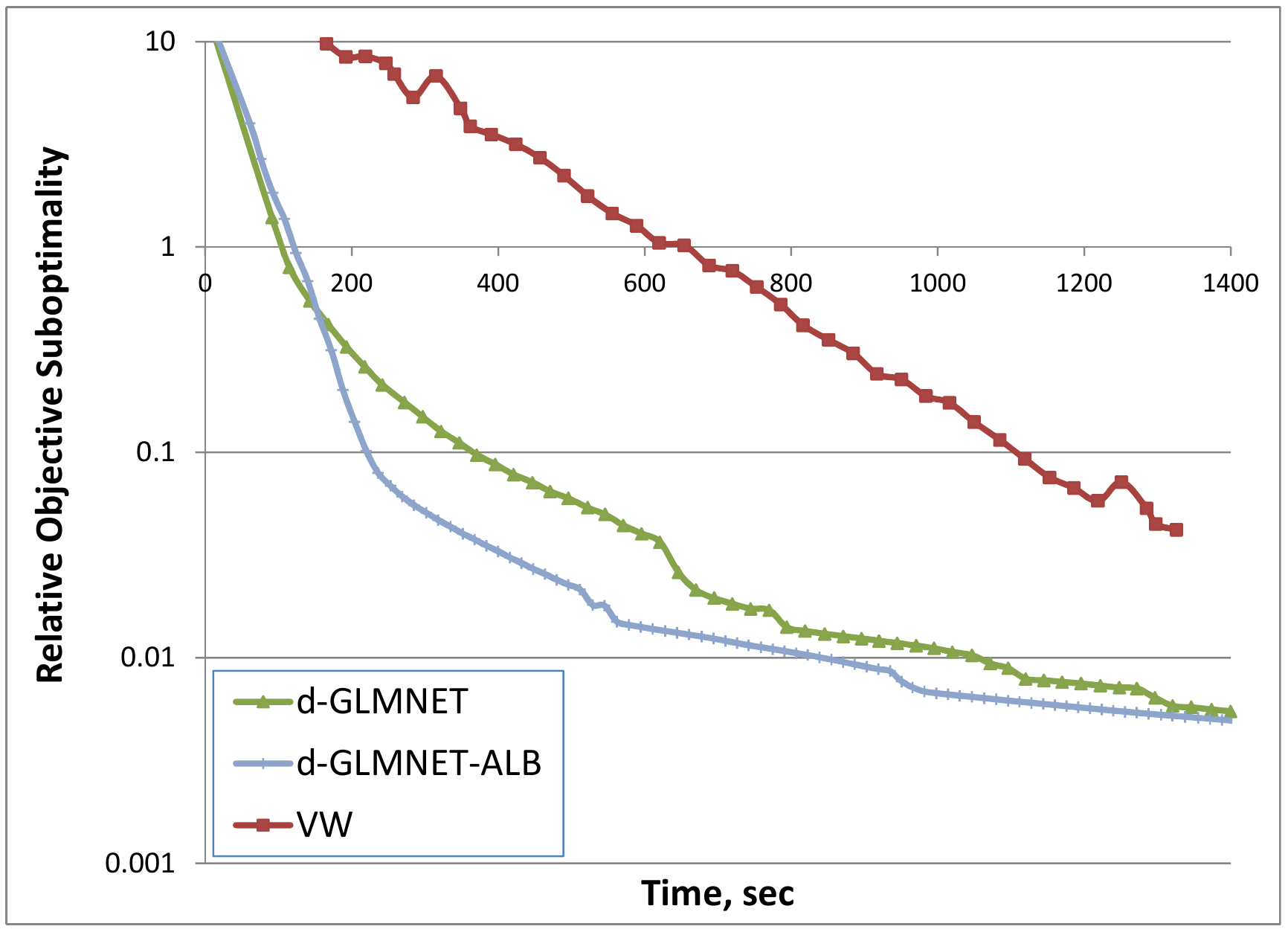}
        }
        \subfigure[yandex\_ad] {
                \includegraphics[width=0.31\textwidth]{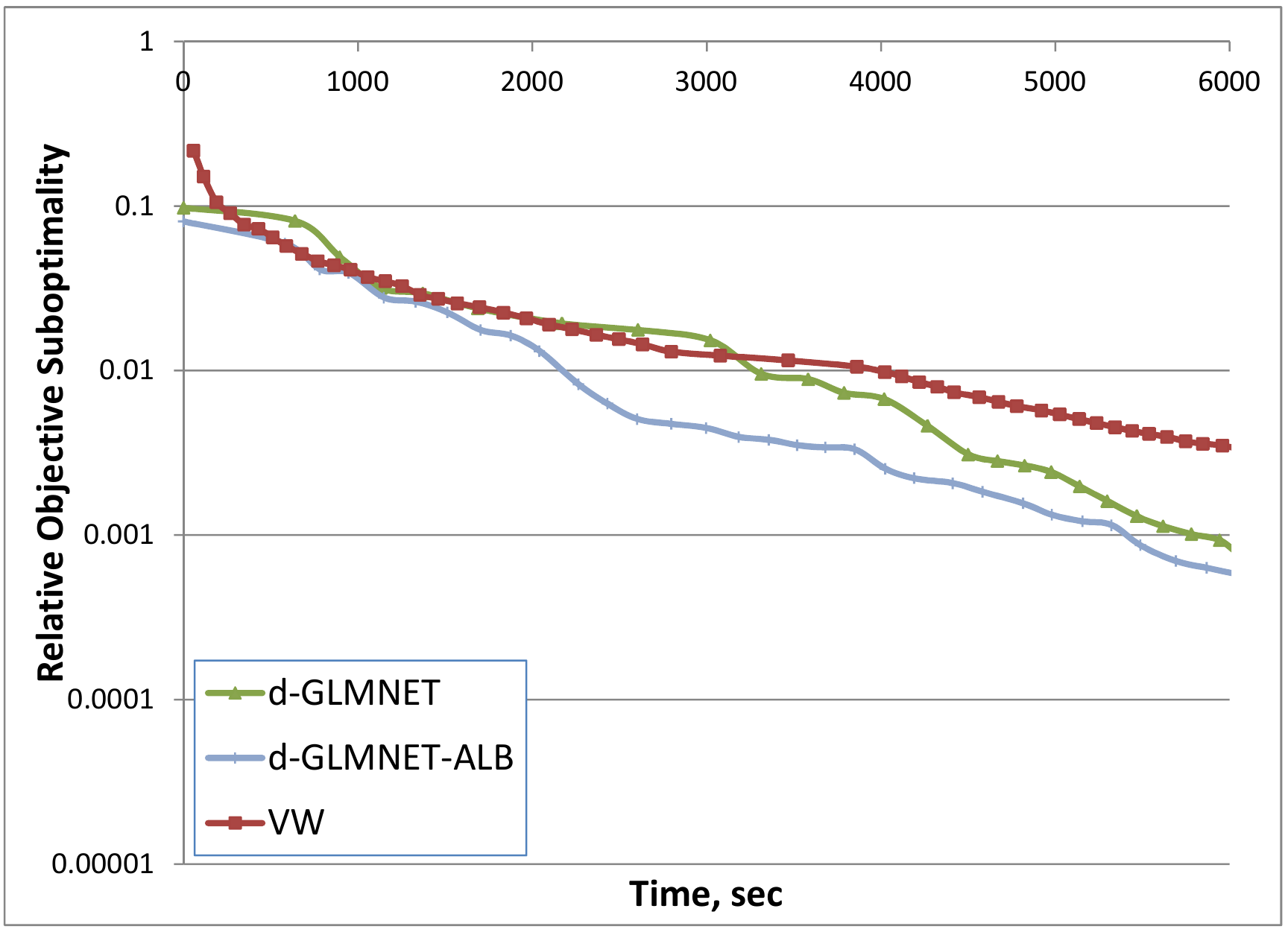}
        }
        \subfigure[epsilon] {
                \includegraphics[width=0.31\textwidth]{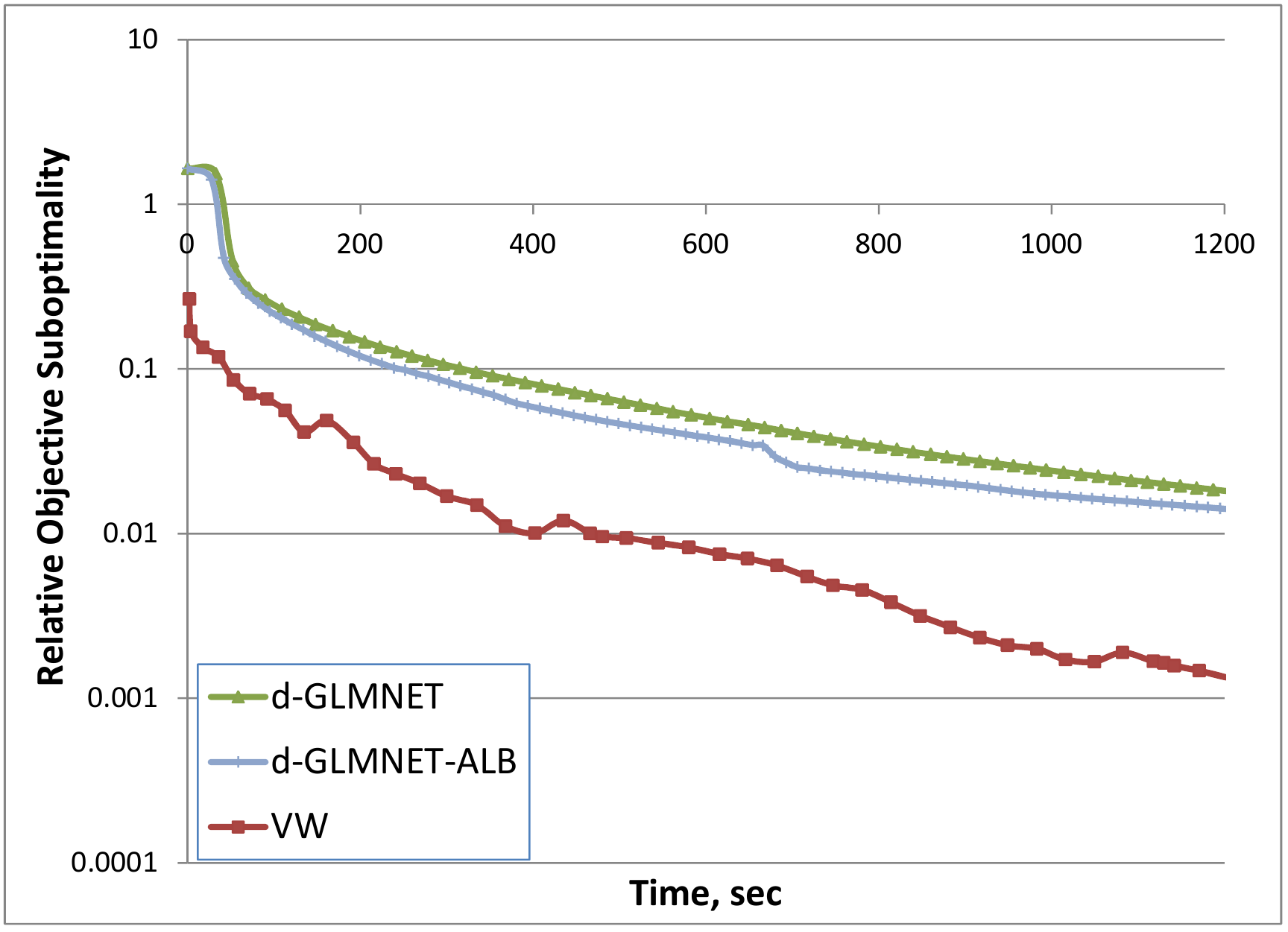}
        }
        \caption{$L_2$ regularization: relative objective suboptimality vs. time.}
        \label{fig:l2-target}
\end{center}
\begin{center}
        \subfigure[webspam] {
                \includegraphics[width=0.31\textwidth]{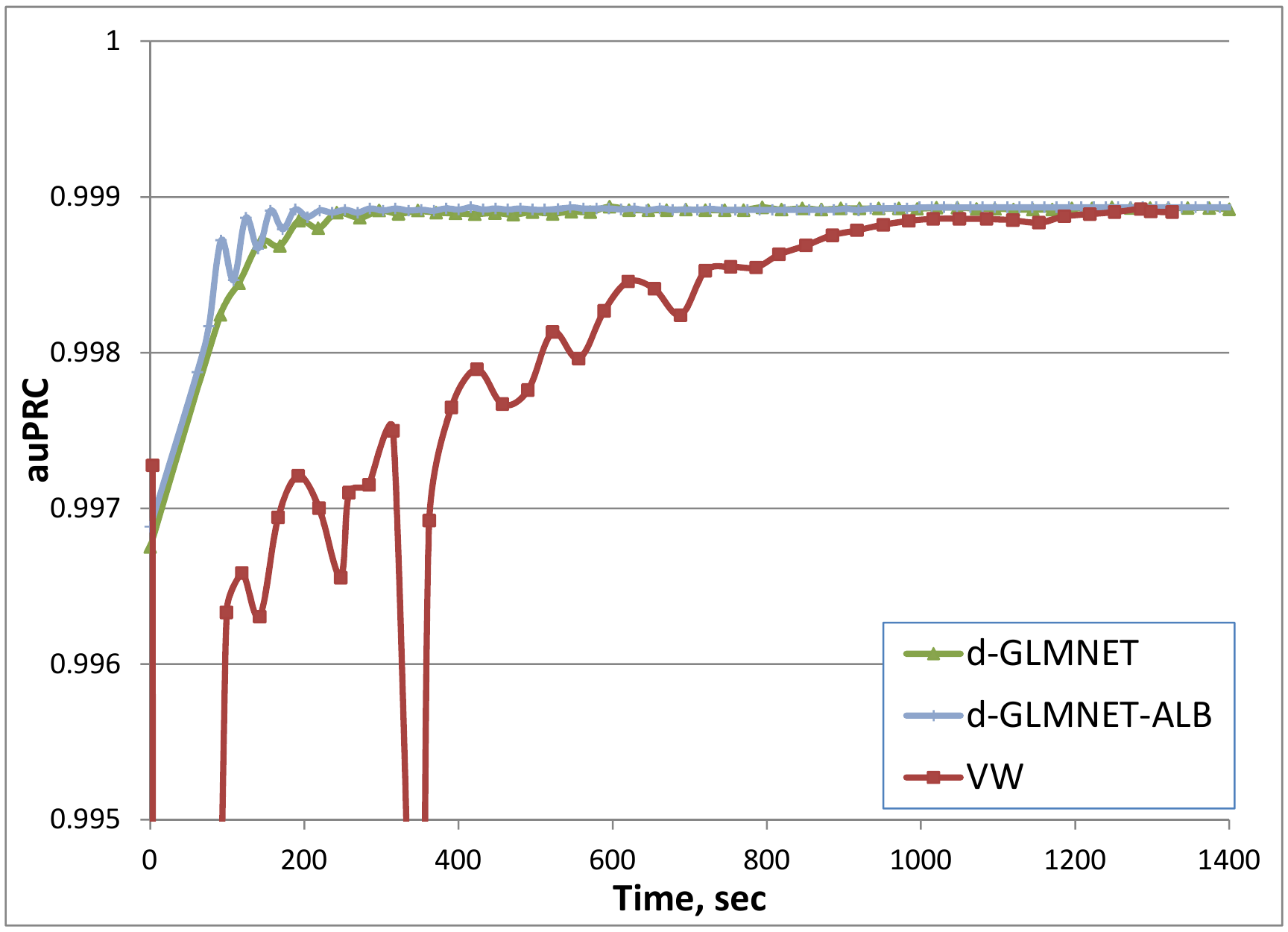}
        }
        \subfigure[yandex\_ad] {
                \includegraphics[width=0.31\textwidth]{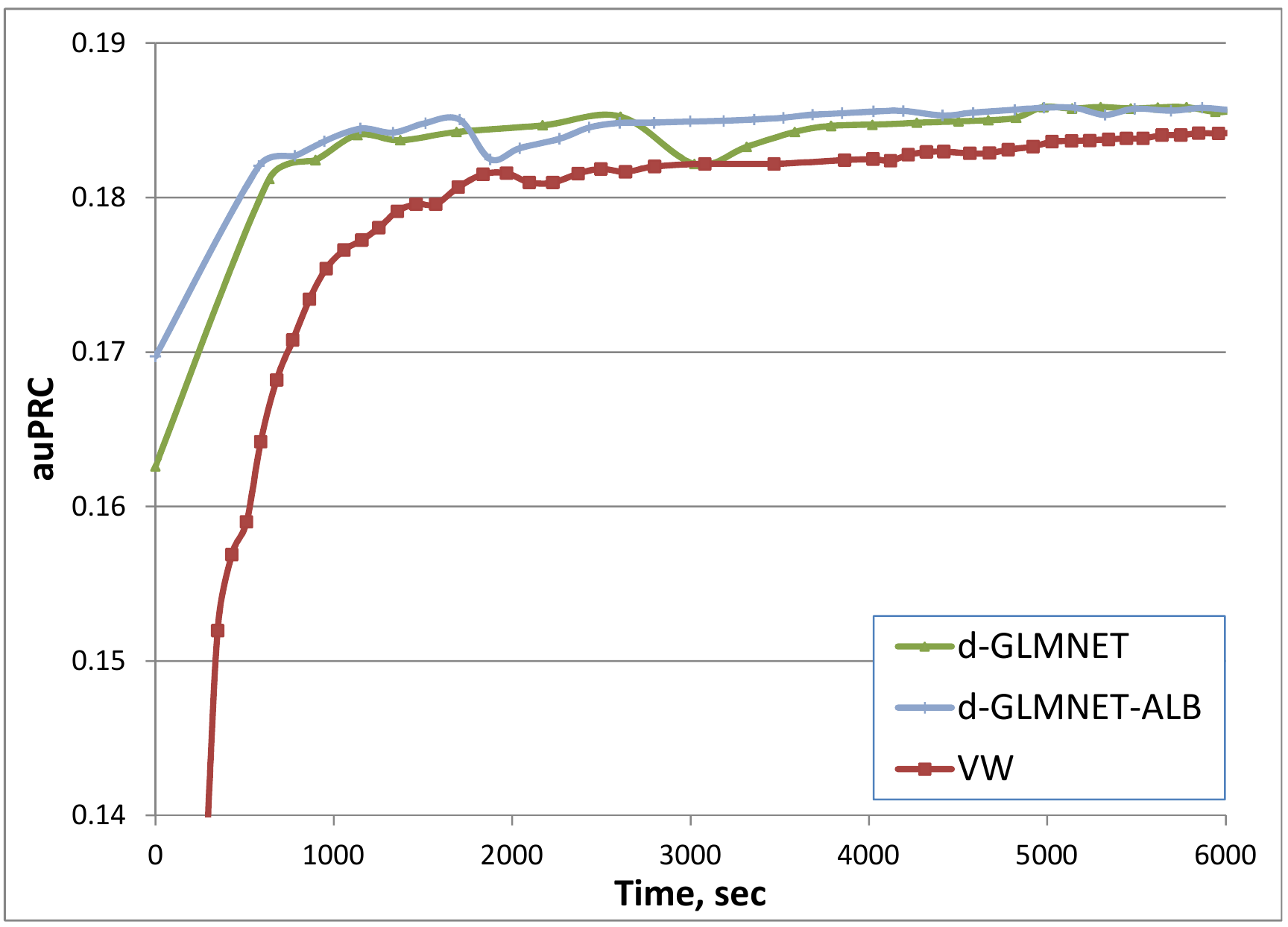}
        }
        \subfigure[epsilon] {
                \includegraphics[width=0.31\textwidth]{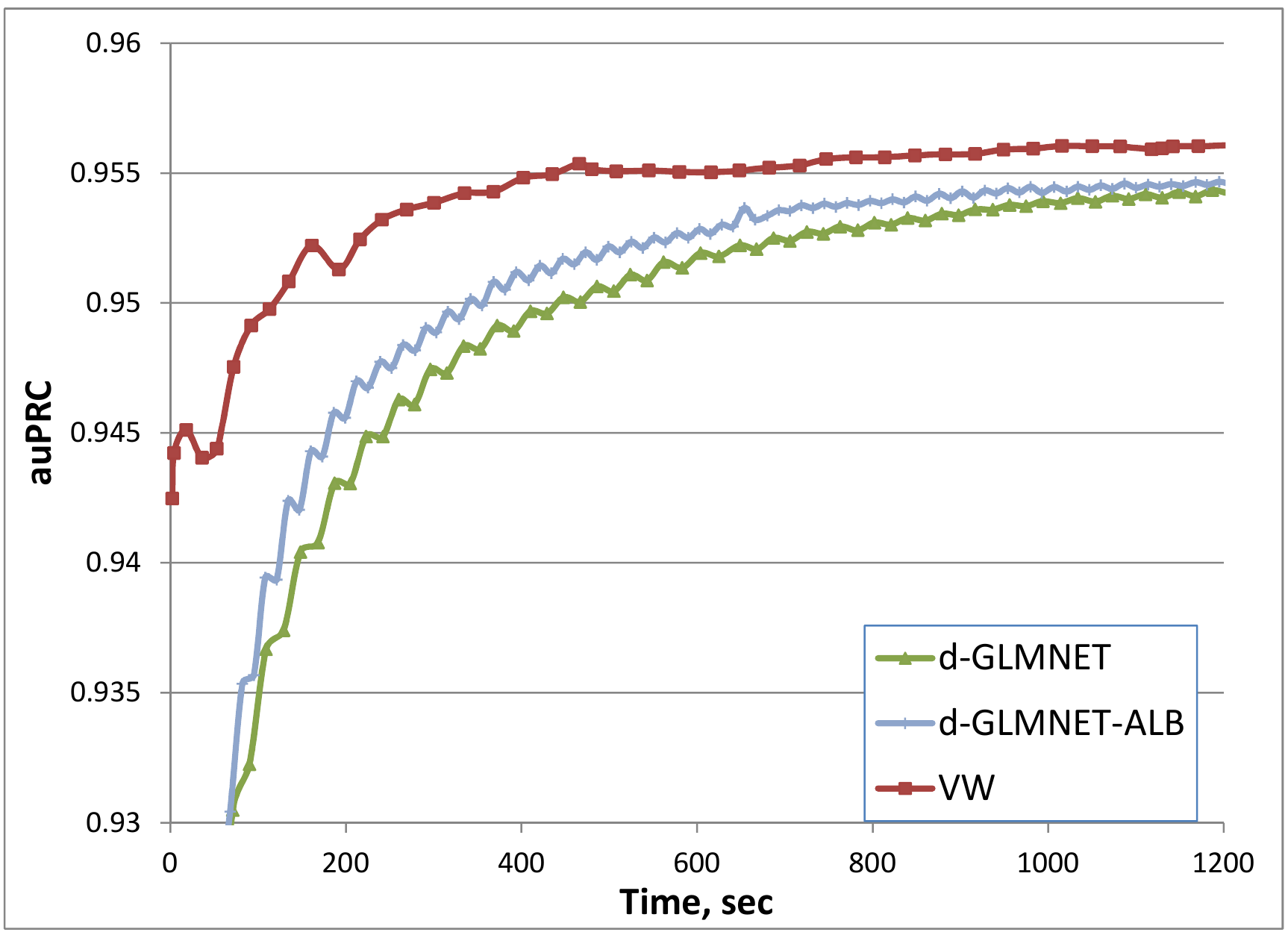}
        }
       \caption{$L_2$ regularization: testing quality (area under precision-recall curve) vs. time.}
       \label{fig:l2-testing}
\end{center}
\end{figure}

\begin{figure}[t]
\begin{center}
        \subfigure[webspam] {
                \includegraphics[width=0.31\textwidth]{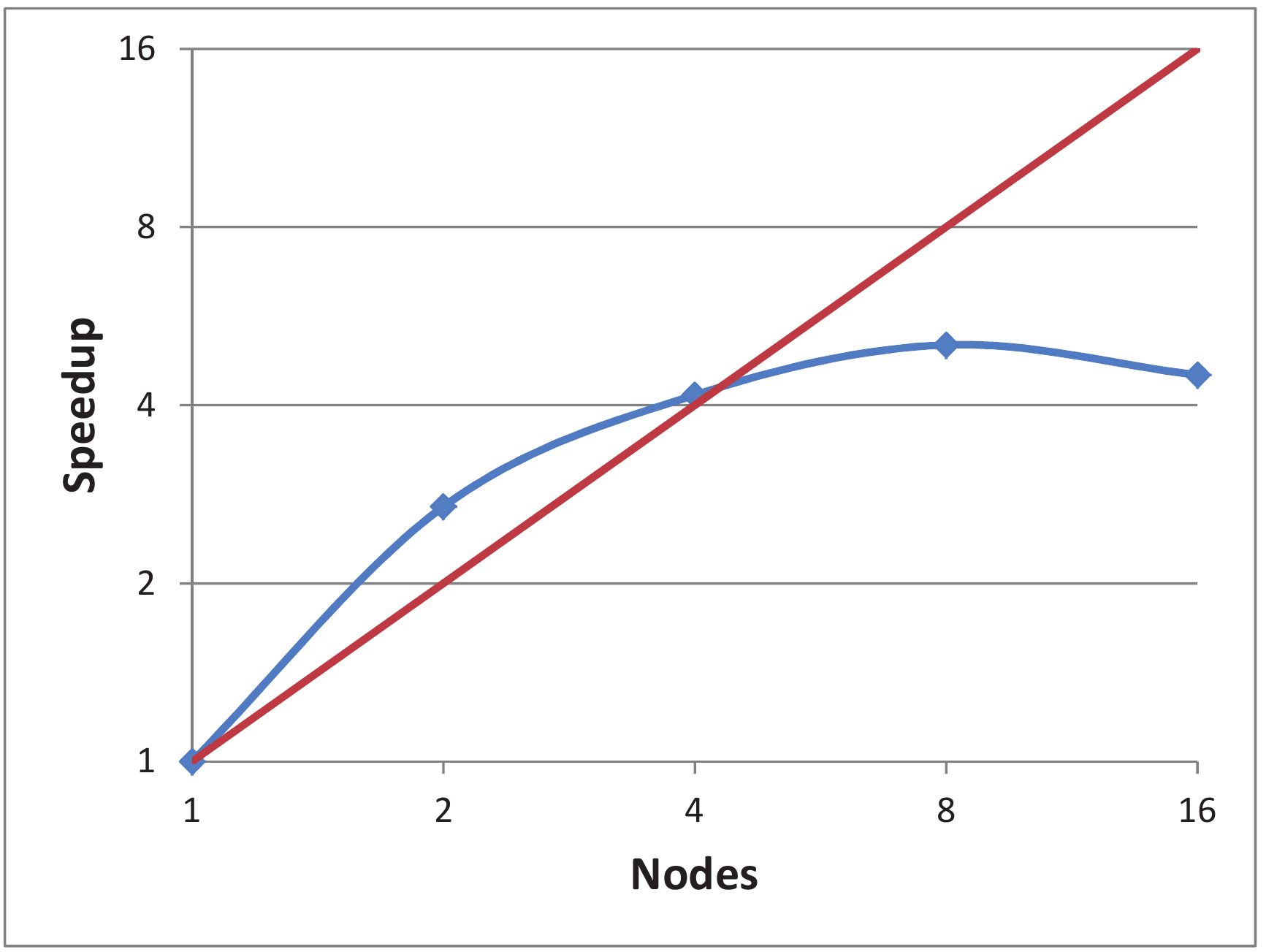}
        }
        \subfigure[yandex\_ad] {
                \includegraphics[width=0.31\textwidth]{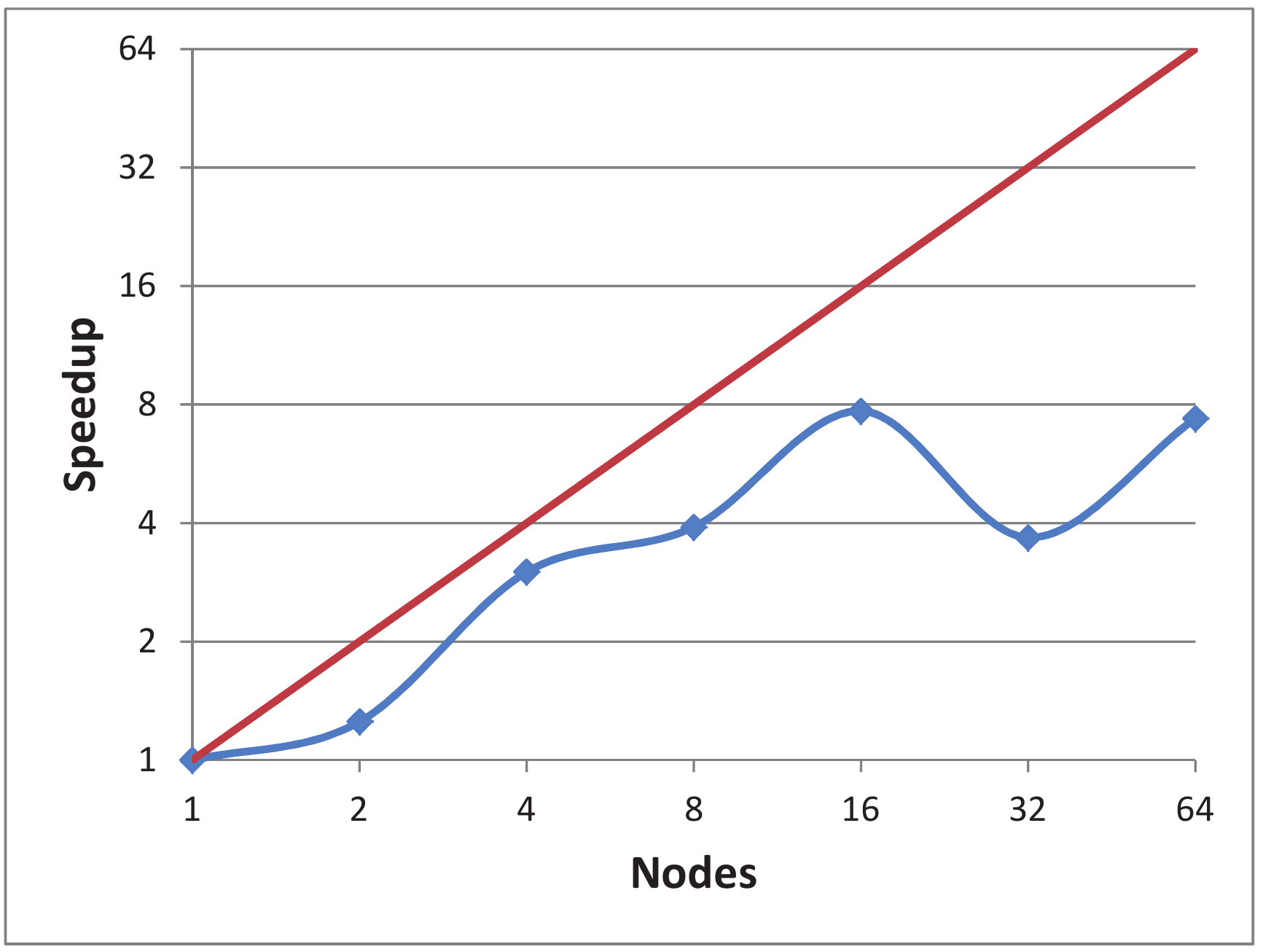}
        }
        \subfigure[epsilon] {
                \includegraphics[width=0.31\textwidth]{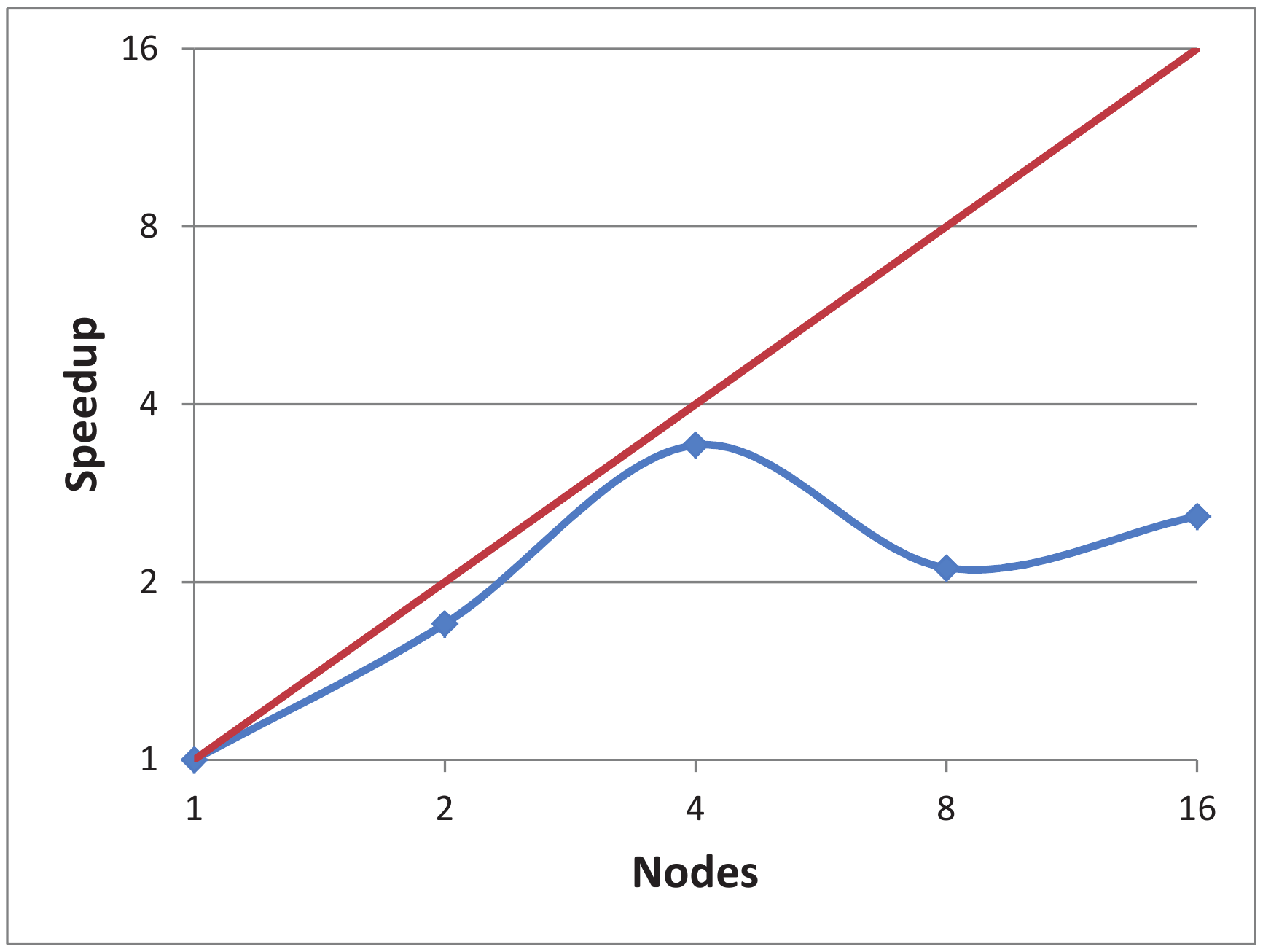}
        }
        \caption{$L_1$ regularization: relative speedup of the \texttt{d-GLMNET-ALB} algorithm for the different number of nodes: blue line. Linear speedup for reference (fictional): red line.}
        \label{fig:l1-speedup}
\end{center}
\begin{center}
        \subfigure[webspam] {
                \includegraphics[width=0.31\textwidth]{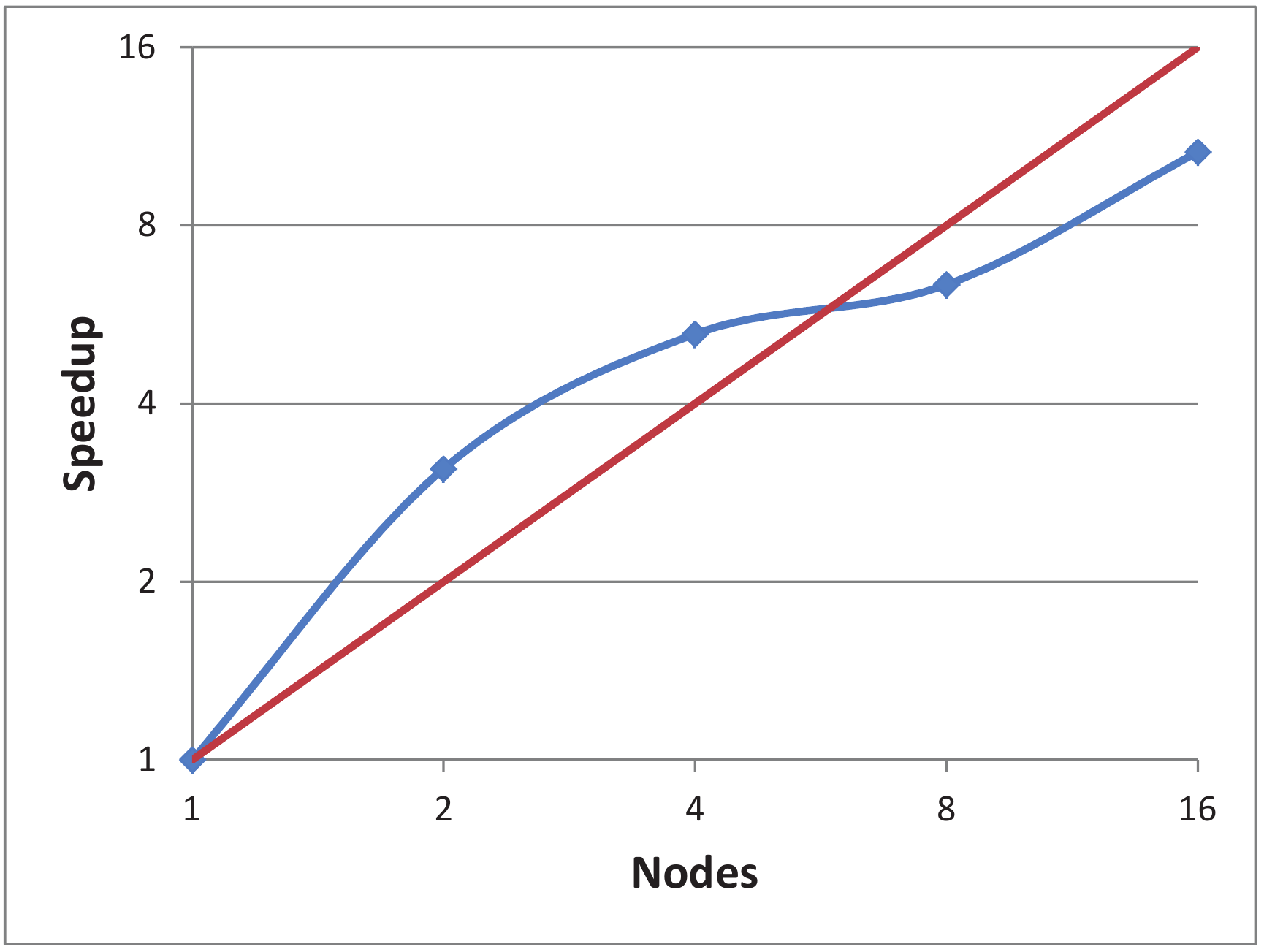}
        }
        \subfigure[yandex\_ad] {
                \includegraphics[width=0.31\textwidth]{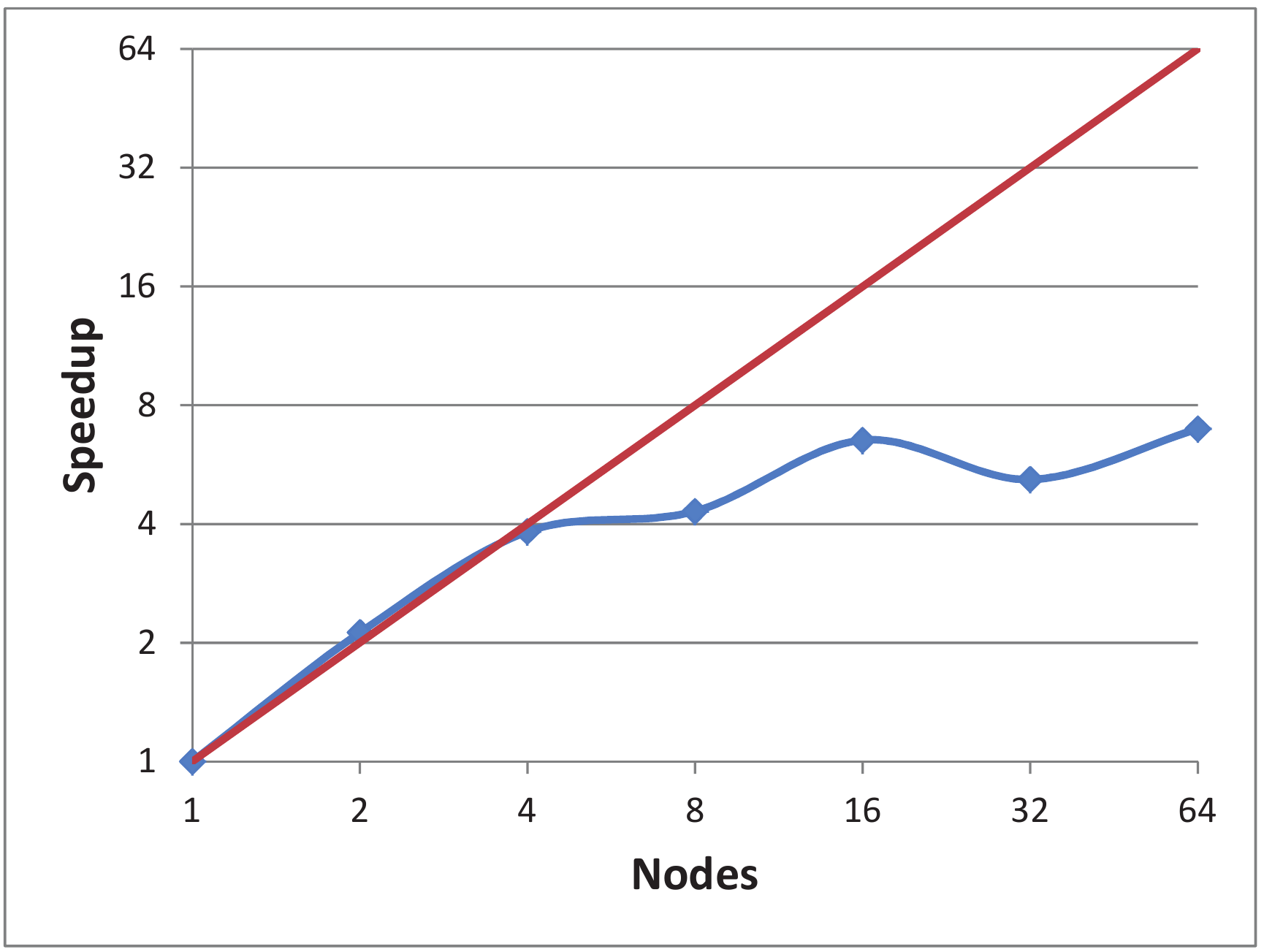}
        }
        \subfigure[epsilon] {
                \includegraphics[width=0.31\textwidth]{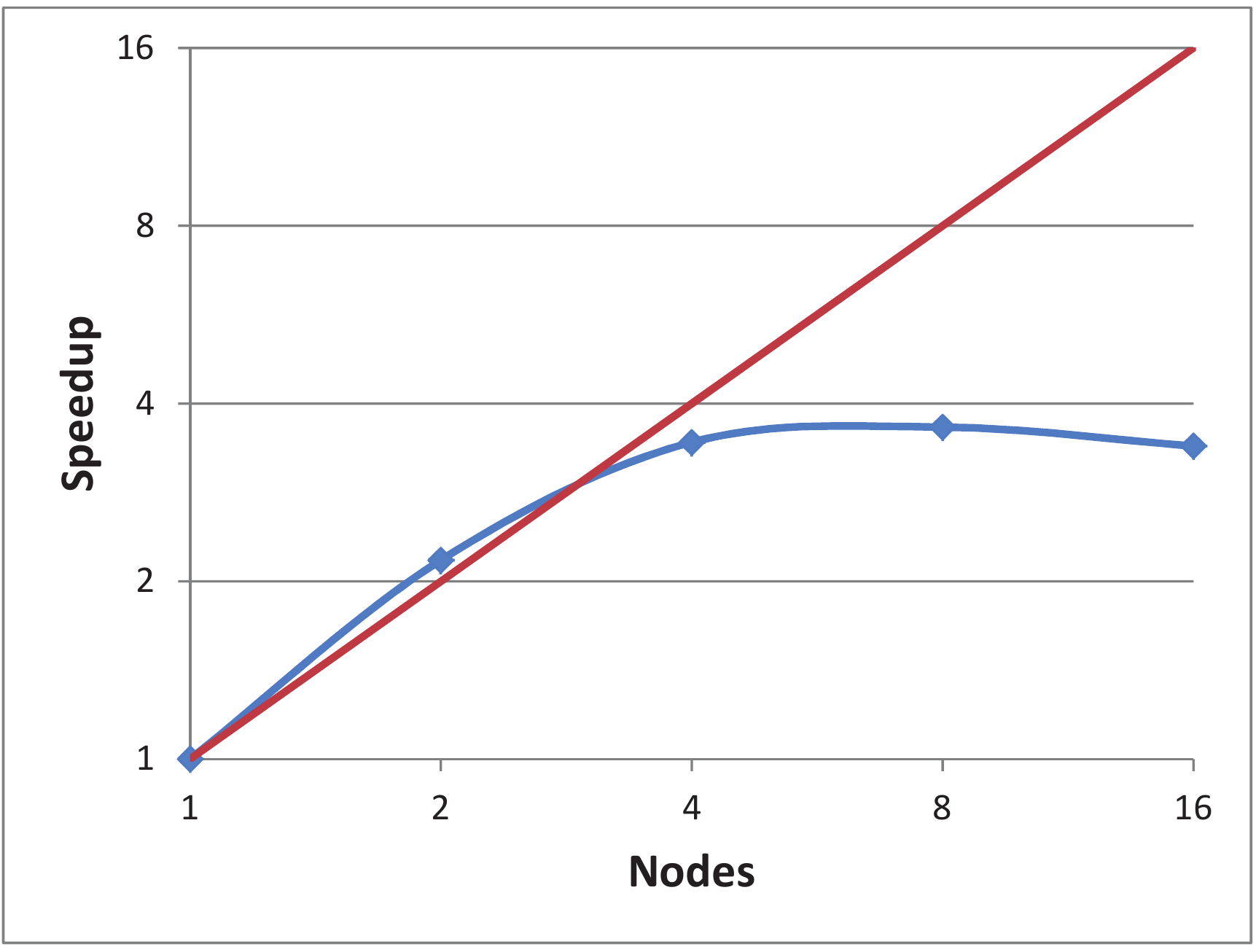}
        }
        \caption{$L_2$ regularization: relative speedup of the \texttt{d-GLMNET-ALB} algorithm for the different number of nodes: blue line. Linear speedup for reference (fictional): red line.}
        \label{fig:l2-speedup}
\end{center}
\end{figure}

\subsection{Results and discussion}

Firstly we show how adaptively changing $\mu$ parameter affects on \texttt{d-GLMNET} algorithm using ``yandex\_ad'' dataset as an example. Fig. \ref{fig:yandex_ad-mu} compares two cases :
 constant $\mu = 1$ and adaptive $\mu$. Adaptively changing $\mu$ slightly improves speed of convergence and testing accuracy but dramatically improves the sparsity.

To evaluate and compare the speed of the algorithms we created scatter plots ``Relative objective suboptimality vs. time'' (Fig. \ref{fig:l1-target} and \ref{fig:l2-target}) and ``Testing quality vs. time'' (Fig. \ref{fig:l1-testing} and \ref{fig:l2-testing}).
With $L_1$ regularization, Fig. \ref{fig:l1-target}, \ref{fig:l1-testing} shows that \texttt{d-GLMNET} algorithm has the same or faster speed of objective function optimization and improving testing accuracy on ``webspam'' and ``yandex\_ad'' datasets then competing algorithms.
The ADMM algorithm generally performs well and it is slightly better than \texttt{d-GLMNET} and \texttt{d-GLMNET-ALB} on ``epsilon'' dataset.
The \texttt{Vowpal Wabbit} program has the same or worse testing accuracy for all datasets but it poorly optimizes the objective.
Experiments with $L_1$ regularization showed that ``Asynchronous Load Balancing'' always improved or left the same the performance of the \texttt{d-GLMNET}.

Also for runs with $L_1$-regularization we created a scatter plot ``Number of non-zero weights vs. time'' (Fig. \ref{fig:l1-nnz}).
The sparsity of solutions by \texttt{d-GLMNET} is better then that of \texttt{ADMM} for ``webspam'' and ``yandex\_ad'' datasets but slightly worse for ``epsilon'' dataset.
Sparsity produced by \texttt{Vowpal Wabbit} is inconsistent: too sparse or too dense when compared to other algorithms.

With $L_2$ regularization, \texttt{d-GLMNET} optimizes the objective function faster (Fig. \ref{fig:l2-target}) and achieves better testing accuracy (Fig. \ref{fig:l2-testing}) on sparse datasets with large number of features - ``webspam'' and ``yandex\_ad''.
However on dense dataset ``epsilon'', where the number of features is relatively small, L-BFGS warmstarted by online learning is better.
Again \texttt{d-GLMNET-ALB} is faster then it's synchronous counterpart.

Evaluation of the  speed of the \texttt{d-GLMNET-ALB} algorithm with different numbers of computing nodes (Fig. \ref{fig:l1-speedup} and \ref{fig:l2-speedup}) shows that on each dataset the speedup achieved with the increased number of nodes is limited.
This happens because of two reasons. First, block-diagonal approximation of the Hessian becomes less accurate while splitting dataset over larger number of nodes, updates from nodes comes in conflict more often, so the algorithm makes smaller steps.
Second, communication cost increases.

\section{Conclusions and future work}

In this paper we presented a novel architecture for training generalized linear models with regularization in the distributed setting
based on parallel coordinate descent.
We implemented a novel parallel coordinate descent algorithm  \texttt{d-GLMNET} and its modification \texttt{d-GLMNET-ALB}, which is immune to the ``slow node problem''.
We proposed a trust-region update which yields a sparse solution in case of $L_1$ regularization.
In a series of numerical experiments we demonstrated that our algorithms and software implementation are well suited for training logistic regression with $L_1$ and $L_2$ regularization on the large scale.
Experiments show that \texttt{d-GLMNET} is superior over several state-of-the-art algorithms when training on sparse high-dimensional datasets.
It possesses a faster convergence speed and enjoys speedup when using multiple computing nodes.
This is essential for large-scale machine learning problems where long training time is often an issue.

\texttt{d-GLMNET} can also be extended to regularizers other than $L_1$ and $L_2$. Optimizing quadratic approximation (\ref{quad}) over one weight $\Delta \beta_j$ via any one-dimensional optimization algorithm is simple enough; it can be done either exactly of approximately for any separable regularizer: bridge, SCAD, e.t.c.

Suchard et al. \cite{Suchard2013} showed that training GLMs on multicore gives significant speedup.
Combining computations on multicore on each node with the distributed architecture is a promising direction for further development.


\begin{acknowledgments}
We would like to thank John Langford for the advices on Vowpal Wabbit and Ilya Muchnik for his continuous support.

\end{acknowledgments}

\appendix


\section{Avoiding line search}
\label{app:large_mu}

\begin{proposition}
When $\mu \ge \frac{\Lambda_{max}}{(1 - \sigma)\lambda_{min}}$ the Armijo rule (\ref{armijo}) with $\gamma = 0$ will be satisfied for $\alpha = 1$.
\end{proposition}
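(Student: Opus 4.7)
The plan is to unfold the Armijo inequality for $\alpha=1$, $\gamma=0$ into
\[
L(\vbeta+\Delta\vbeta) + R(\vbeta+\Delta\vbeta) - L(\vbeta) - R(\vbeta) \;\le\; \sigma D,
\]
with $D = \nabla L(\vbeta)^T \Delta\vbeta + R(\vbeta+\Delta\vbeta) - R(\vbeta)$, and then bound the left-hand side above by $D + \tfrac{\Lambda_{max}}{2}\|\Delta\vbeta\|^2$ using a quadratic upper bound on $L$, while bounding $-D$ below by a multiple of $\|\Delta\vbeta\|^2$ using optimality of $\Delta\vbeta$ in the subproblem (\ref{quad-appr-mod}). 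Matching these two estimates will yield exactly the stated threshold on $\mu$.

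First I would use (\ref{full-hessian-limit}), which gives $\nabla^2 L(\vbeta) \preceq \Lambda_{max} I$ uniformly, together with a second-order Taylor expansion of $L$ along the segment from $\vbeta$ to $\vbeta + \Delta\vbeta$, to conclude
\[
L(\vbeta+\Delta\vbeta) \;\le\; L(\vbeta) + \nabla L(\vbeta)^T \Delta\vbeta + \tfrac{\Lambda_{max}}{2}\|\Delta\vbeta\|^2.
\]
Adding $R(\vbeta+\Delta\vbeta) - R(\vbeta)$ to both sides rewrites the left-hand side of the Armijo condition as at most $D + \tfrac{\Lambda_{max}}{2}\|\Delta\vbeta\|^2$.

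Second, since $\Delta\vbeta$ is the minimizer in (\ref{quad-appr-mod}) and the point $\Delta\vbeta = 0$ is feasible, evaluating $L_q^{gen}(\vbeta,\cdot) + R(\vbeta+\cdot)$ at $\Delta\vbeta$ versus at $0$ gives
\[
\nabla L(\vbeta)^T\Delta\vbeta + \tfrac{\mu}{2}\Delta\vbeta^T(\tilde H(\vbeta)+\nu I)\Delta\vbeta + R(\vbeta+\Delta\vbeta) - R(\vbeta) \;\le\; 0,
\]
so $D \le -\tfrac{\mu}{2}\Delta\vbeta^T(\tilde H(\vbeta)+\nu I)\Delta\vbeta \le -\tfrac{\mu\,\lambda_{min}}{2}\|\Delta\vbeta\|^2$, where the last inequality uses the spectral bound on $\tilde H(\vbeta)+\nu I$ established in Section \ref{sec:convergence}. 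In particular $D \le 0$.

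Finally, combining the two estimates, the Armijo inequality $D + \tfrac{\Lambda_{max}}{2}\|\Delta\vbeta\|^2 \le \sigma D$ is equivalent to $\tfrac{\Lambda_{max}}{2}\|\Delta\vbeta\|^2 \le (1-\sigma)(-D)$, and the lower bound on $-D$ makes this sufficient whenever $\Lambda_{max} \le (1-\sigma)\mu\,\lambda_{min}$, which is precisely the hypothesis. The only subtle point is ensuring the Hessian upper bound holds along the whole segment and that we rely on $D \le 0$ only after deriving it from the subproblem's optimality; both are immediate from earlier material, so no real obstacle arises beyond careful bookkeeping.
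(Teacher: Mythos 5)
Your proposal is correct and follows essentially the same route as the paper's own proof: a quadratic upper bound on $L$ via $\nabla^2 L(\vbeta) \preceq \Lambda_{max} I$, the optimality of $\Delta\vbeta$ against the zero step to get $D \le -\tfrac{\mu}{2}\Delta\vbeta^T(\tilde H(\vbeta)+\nu I)\Delta\vbeta \le -\tfrac{\mu\lambda_{min}}{2}\|\Delta\vbeta\|^2$, and matching the two bounds to recover the threshold on $\mu$. No gaps; the bookkeeping is exactly as in Appendix~\ref{app:large_mu}.
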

\begin{proof}
Let $g(t) = L(\vbeta + t \Delta \vbeta)$. Then
\begin{align*}
g'(t) = \nabla L(\vbeta + t \Delta \vbeta)^T \Delta \vbeta, \\
g''(t) = \Delta \vbeta^T \nabla^2 L(\vbeta + t \Delta \vbeta) \Delta \vbeta.
\end{align*}
We obtain the upper bound for $L(\vbeta + \Delta \vbeta)$
\begin{align*}
L(\vbeta + \Delta \vbeta) & = g(1) = g(0) + \int_{0}^{1} g'(t) dt \le g(0) + \int_{0}^{1} \left( g'(0) + t \max_{z \in [0, 1]} | g''(z) | \right) dt \\
& = L(\vbeta) + \nabla L(\vbeta)^T \Delta \vbeta + \frac{1}{2} \max_{z \in [0, 1]} | \Delta \vbeta^T \nabla^2 L(\vbeta + z \Delta \vbeta) \Delta \vbeta | \\
& \le L(\vbeta) + \nabla L(\vbeta)^T \Delta \vbeta +  \frac{1}{2} \Lambda_{max} \| \Delta \vbeta \|^2.
\end{align*}
In the last inequality we used $\nabla^2 L(\vbeta) \preceq \Lambda_{max} I$ which follows from (\ref{full-hessian-limit}).
Then
\begin{align}
\label{f_delta_beta_limit}
f(\vbeta + \Delta \vbeta^*) - f(\vbeta) = L(\vbeta + \Delta \vbeta^*) - L(\vbeta) + R(\Delta \vbeta  + \vbeta^*) - R(\vbeta) \notag \\
\le \nabla L(\vbeta)^T \Delta \vbeta +  \frac{1}{2} \Lambda_{max} \| \Delta \vbeta \|^2 + R(\vbeta + \Delta \vbeta^*) - R(\vbeta)
= D + \frac{1}{2} \Lambda_{max} \| \Delta \vbeta \|^2.
\end{align}

Where we used $D$ from Armijo rule (\ref{armijo}) for a particular case $\gamma = 0$
$$
D = \nabla L(\vbeta)^T \Delta \vbeta + R(\vbeta + \Delta \vbeta^*) - R(\vbeta).
$$
Since $\Delta \vbeta^*$ minimizes (\ref{quad-appr-mod})
\begin{align*}
\nabla L(\vbeta)^T \Delta \vbeta + R(\vbeta + \Delta \vbeta^*) + \frac{1}{2} (\Delta \vbeta^*) (\mu (\tilde{H}(\vbeta) + \nu I)) \Delta \vbeta^* \le R(\vbeta),
\end{align*}
then $D$ has the upper bound
\begin{align*}
\nabla L(\vbeta)^T \Delta \vbeta + R(\vbeta + \Delta \vbeta^*)- R(\vbeta) & \le - \frac{1}{2} (\Delta \vbeta^*) (\mu (\tilde{H}(\vbeta)) + \nu I) \Delta \vbeta^* \\
D & \le  - \frac{1}{2} (\Delta \vbeta^*) \mu (\tilde{H}(\vbeta) + \nu I) \Delta \vbeta^*.
\end{align*}
By noticing that $\lambda_{min} I \preceq \tilde{H}(\vbeta) + \nu I$ we obtain for $\mu \ge \frac{\Lambda_{max}}{(1 - \sigma)\lambda_{min}}$:
\begin{align}
\label{lambda_max_limit}
\frac{1}{2} \Lambda_{max} \| \Delta \vbeta^* \|^2 \le \frac{1}{2} (1 - \sigma) \mu \lambda_{min} \| \Delta \vbeta^* \|^2 & \le \frac{1}{2} (1 - \sigma) (\Delta \vbeta^*)^T
(\mu (\tilde{H} + \nu I)) \Delta \vbeta^* \notag \\
& \le - (1 - \sigma) D.
\end{align}

Substituting (\ref{lambda_max_limit}) into (\ref{f_delta_beta_limit}) yields
$$
f(\vbeta + \Delta \vbeta^*) - f(\vbeta) \le D - (1 - \sigma) D = \sigma D,
$$
which proves that Armijo rule is satisfied for $\alpha = 1$.
\end{proof}

\section{Loss functions second derivative upper bounds}
\label{app:second_der}

\begin{itemize}
\item \textbf{Squared loss}: $\ell(y, \hat{y}) = \frac{1}{2} (y - \hy)^2$, $\frac{\partial^2 \ell(y, \hy)}{\partial \hy^2} = 1$.
\item \textbf{Logistic loss}: $\ell(y, \hat{y}) = \log(1 + \exp(-y \hat{y}))$.
For logistic loss $\frac{\partial^2 \ell(y, \hy)}{\partial \hy^2} = p(\hy) (1 - p(\hy))$, where $p(\hy) = 1 / (1 + e^{-\hy})$ and consequently
$\frac{\partial^2 \ell(y, \hy)}{\partial \hy^2} \le \frac{1}{4}$.

\item \textbf{Probit loss}: $\ell(y, \hat{y}) = - \log(\Phi(y\hat{y}))$, where $\Phi(\cdot)$ is a CDF of a normal distribution.
Denote $p(\hat{y}) = \frac{1}{\sqrt{2 \pi}} \exp\left(-\hat{y}^2 / 2 \right)$.
It is sufficient to give the proof only for $y = 1$ because $\frac{\partial^2 \ell(-1, \hy)}{\partial \hy^2} = \frac{\partial^2 \ell(1, -\hy)}{\partial \hy^2}$.
We have
\begin{align*}
\frac{\partial^2 \ell(y, \hat{y})}{\partial \hat{y}^2} & = \frac{\hy p(\hat{y})}{\Phi(\hat{y})} + \frac{p^2(\hat{y})}{\Phi^2(\hat{y})}.
\end{align*}
When $\hy \ge 0$ second derivative has upper bound
$$
\frac{\hy p(\hat{y})}{\Phi(\hat{y})} + \frac{p^2(\hat{y})}{\Phi^2(\hat{y})} \le 2 \hy p(\hat{y}) + 4 p^2(\hat{y}) \le 2 p(1) + 4 p(0),
$$
because $\Phi(\hat{y}) \ge \Phi(0) = 1/2$ and $\hy p(\hat{y})$ reaches maximum in $\hy = 1$. When $\hy \in (-1, 0)$ second derivative is bounded.
The case $\hy \le -1$ is a bit more complex. From \cite{normal_cdf_bounds} we have
$$
\frac{|\hy|p(\hy)}{1 + \hy^2} < \Phi(\hy) < \frac{p(\hy)}{|\hy|},
$$
then
\begin{align*}
\frac{1}{\Phi(\hy)} < \frac{1 + \hy^2}{|\hy|p(\hy)}, \\
\frac{\hy}{\Phi(\hy)} <  \hy\frac{|\hy|}{p(\hy)},
\end{align*}
and finally
\begin{align*}
\frac{\partial^2 \ell(y, \hat{y})}{\partial \hat{y}^2} = \frac{\hy p(\hat{y})}{\Phi(\hat{y})} + \frac{p^2(\hat{y})}{\Phi^2(\hat{y})} < \hy|\hy| + \left(\frac{1 + \hy^2}{|\hy|} \right)^2 = -\hy^2 + \frac{1 + 2 \hy^2 + \hy^4}{\hy^2} = 2 + \frac{1}{\hy^2} \le 3.
\end{align*}
Thus for all cases $\hy \ge 0$, $\hy \in (-1, 0)$, $\hy \le -1$ second derivative has the upper bound.
\end{itemize}

\section{Area under Precision-Recall curve (auPRC)}
\label{sec:auprc}
Area under Precision-Recall curve is a classification quality measure.
Consider $n$ examples with binary class labels $y_i \in \{-1, +1\}$ and a classifier predictions with a real-valued outcomes $p_i \in [0, 1]$.
Given a threshold $a$ precision (Pr) and recall (Rc) are defined as follows
\begin{align*}
Pr(a) = \frac{|\{i \suchthat p_i \ge a \;\&\; y_i = +1\}|}{|\{i \suchthat p_i \ge a \}|}, \\
Rc(a) = \frac{|\{i \suchthat p_i \ge a \;\&\; y_i = +1\}|}{|\{i \suchthat y_i = +1\}|}.
\end{align*}
Precision-Recall curve is obtained by varying $a \in [0, 1]$. The area under this curve is considered a classification quality measure.
It is more sensitive than a commonly used ROC AUC in case of highly imbalanced classes \cite{davis2006relationship}.

\end{document}